\def\eqref#1{equation~\ref{#1}}
\def\1{\bm{1}}
\def\eps{{\epsilon}}
\DeclareMathAlphabet{\mathsfit}{\encodingdefault}{\sfdefault}{m}{sl}
\SetMathAlphabet{\mathsfit}{bold}{\encodingdefault}{\sfdefault}{bx}{n}
\def\gN{{\mathcal{N}}}
\def\sA{{\mathbb{A}}}
\def\sR{{\mathbb{R}}}
\def\sS{{\mathbb{S}}}
\newcommand{\E}{\mathbb{E}}
\newcommand{\Ls}{\mathcal{L}}
\newcommand{\Var}{\mathrm{Var}}
\DeclareMathOperator*{\argmax}{arg\,max}
\DeclareMathOperator{\sign}{sign}
\newcommand{\norm}[1]{\left\lVert#1\right\rVert}
\theoremstyle{plain}
\newtheorem{theorem}{Theorem}[section]
\newtheorem{proposition}[theorem]{Proposition}
\theoremstyle{definition}
\newtheorem{definition}[theorem]{Definition}
\newtheorem{assumption}[theorem]{Assumption}
\theoremstyle{remark}
\icmltitlerunning{Detecting Adversarial Directions in Deep Reinforcement Learning to Make Robust Decisions}
\begin{document}

\twocolumn[
\icmltitle{Detecting Adversarial Directions in Deep Reinforcement Learning to Make Robust Decisions}

\begin{icmlauthorlist}
\icmlauthor{Ezgi Korkmaz}{}
\icmlauthor{Jonah Brown-Cohen}{}
\end{icmlauthorlist}

\icmlcorrespondingauthor{Ezgi Korkmaz}{}

\icmlkeywords{Machine Learning, ICML}

\vskip 0.3in
]



\printAffiliationsAndNotice{}  

\begin{abstract}
Learning in MDPs with highly complex state representations is currently possible due to multiple advancements in reinforcement learning algorithm design. However, this incline in complexity, and furthermore the increase in the dimensions of the observation came at the cost of volatility that can be taken advantage of via adversarial attacks (i.e. moving along worst-case directions in the observation space). To solve this policy instability problem we propose a novel method to detect the presence of these non-robust directions via local quadratic approximation of the deep neural policy loss. Our method provides a theoretical basis for the fundamental cut-off between safe observations and adversarial observations.
Furthermore, our technique is computationally efficient, and does not depend on the methods used to produce the worst-case directions. We conduct extensive experiments in the Arcade Learning Environment with several different adversarial attack techniques. Most significantly, we demonstrate the effectiveness of our approach even in the setting where non-robust directions are explicitly optimized to circumvent our proposed method.
\end{abstract}

\section{Introduction}
\label{submission}

Since \citet{mn15} showed that deep neural networks can be used to parameterize reinforcement learning policies, there has been substantial growth in new algorithms and applications for deep reinforcement learning. While this progress has resulted in a variety of new capabilities for reinforcement learning agents, it has at the same time introduced new challenges due to the volatility of deep neural networks under imperceptible adversarial directions originally discovered by \citet{szegedy13}. In particular, \citet{huang17, kos17} showed that the non-robustness of neural networks to adversarial perturbations extends to the deep reinforcement learning domain, where applications such as autonomous driving, automatic financial trading or healthcare decision making cannot tolerate such a vulnerability.

There has been a significant amount of effort in trying to make deep neural networks robust to adversarial perturbations \citep{fellow15,madry18, pinto17}. However, in this arms race it has been shown that deep reinforcement learning policies learn adversarial features independent from their worst-case (i.e. adversarial) training techniques \citep{korkmaz22}. More intriguingly, a line of work has focused on showing the inevitability of the existence of adversarial directions, the cost of robust training on generalization, and the intrinsic difficulty of learning robust models \citep{elvis19, saed19,korkmaz2023aaai,pascal19}. Given that it may not be possible to make DNNs completely robust to adversarial examples, a natural objective is to instead attempt to detect the presence of adversarial manipulations.

In this paper we propose a novel identification method for directions of volatility in the deep neural policy manifold. Our study is the first one that focuses on detection of adversarial directions in the deep reinforcement learning neural loss landscape.
 Our approach relies on differences in the curvature of the neural policy in the neighborhood of an adversarial observation when compared to a baseline state observation. At a high level our method is based on the intuition that while baseline states have neighborhoods determined by an optimization procedure intended to learn a policy that works well across all states, each non-robust direction is the output of some local optimization in the neighborhood of one particular state. Our proposed method is computationally efficient, requiring only one gradient computation and two policy evaluations, requires no training that depends on the method used to compute the adversarial direction, and is theoretically well-founded. Hence, our study focuses on identification of non-robust directions and makes the following contributions:

\begin{itemize}
\item Our paper is the first to focus on identification of adversarial directions in the deep reinforcement learning policy manifold.
\item We propose a novel method, Identification of Non-Robust Directions (INRD), to detect adversarial state manipulations based on the local curvature of the neural network policy. INRD is independent of the method used to generate the adversarial direction, computationally efficient, and theoretically justified.
\item We conduct experiments in various MDPs from the Arcade Learning Environment that demonstrate the effectiveness of INRD in identifying adversarial directions computed via several state-of-the-art adversarial attack methods.
\item Most importantly, we demonstrate that INRD remains effective even against multiple methods for generating non-robust directions specifically designed to evade INRD.
\end{itemize}
\section{Related Work and Background}
\subsection{Deep Reinforcement Learning}
In this paper we focus on discrete action set Markov Decision Processes (MDPs) which are given by a continuous set of states $\sS$, a discrete set of actions $\sA$, a transition probability function $P:\sS \times \sA \times \sS \to \sR$, and a reward function $\mathcal{R}: \sS \times \sA \times \sS \to \sR$. A policy $\pi:\sS \to \mathcal{P}(\sA)$ assigns a probability distribution on actions $\pi(\cdot\rvert s)$ to each state $s$.
The goal in reinforcement learning is to learn the state-action value function that maximizes expected cumulative discounted rewards $R = \mathbb{E}_{a_t \sim \pi(s_t,\cdot)} \sum_t \gamma^t \mathcal{R}(s_t,a_t,s_{t+1})$ by taking action $a$ in state $s$.
The temporal difference learning is achieved by one step Q-learning which updates $Q(s_t,a_t)$ by
\vskip-0.3in
\[
 Q(s_t,a_t) + \alpha [ \mathcal{R}_{t+1}+ \gamma \max_a Q(s_{t+1}, a) -Q(s_t,a_t) ].
\]
\subsection{Adversarial Examples}
\citet{fellow15} introduced the fast gradient method (FGM) for producing adversarial examples for image classification. The method is based on taking the gradient of the training cost function $J(x,y)$ with respect to the input image, and bounding the perturbation by $\epsilon$ where $x$ is the input image and $y$ is the output label. Later, an iterative version of FGM called I-FGM was proposed by \citet{kurakin16}. This is also often referred to as Projected Gradient Descent (PGD) as in \citep{madry18} where the I-FGM update is
\begin{equation}
x_{\textrm{adv}}^{N+1} = \textrm{clip}_\epsilon(x_{\textrm{adv}}^N +\alpha \textrm{sign}(\nabla_x J(x^N_{\textrm{adv}},y))).
\end{equation}
where $x^0_{\textrm{adv}}= x$. \citet{don18} further modified I-FGM by introducing a momentum term in the update, yielding a method called MI-FGSM.
\citet{korkmaz20} later proposed a Nesterov-momentum based approach for the deep reinforcement learning domain.
The DeepFool method of \citet{dezfool16} is an alternative approach to those based on FGSM. DeepFool performs iterative projection to the closest separating hyperplane between classes. Another alternative approach proposed by \citet{carlini17} is based on finding a minimal perturbation that achieves a different target class label. The approach is based on minimizing the loss
\begin{equation}
\mathnormal{\min_{s^{\textrm{adv}} \in \sS} c\cdot J(s^{\textrm{adv}}) + \norm{s^{\textrm{adv}}-s}_2^2}
\label{carlini}
\end{equation}
where $s$ is the base input, $s_{\textrm{adv}}$ is the adversarial example, and $J(s)$ is a modified version of the cost function used to train the network. \citet{ead18} proposed a variant of the \citet{carlini17} formulation that adds an $\ell_1$-regularization term to produce sparser adversarial examples,
\begin{equation}
\mathnormal{\min_{s^{\textrm{adv}} \in \sS} c\cdot  J(s^{\textrm{adv}}) + \lambda_1\norm{s^{\textrm{adv}}-s}_1 + \lambda_2\norm{s^{\textrm{adv}}-s}_2^2}
\label{ead}
\end{equation}
Our method focusing on identifying non-robust directions in the deep neural policy manifold is the first method to investigate detection of adversarial manipulations in deep reinforcement learning. Our identification method does not require modifying the training of the neural network, does not require any training tailored to the adversarial method used, and uses only two neural network function evaluations and one gradient computation.
\subsection{Adversarial Deep Reinforcement Learning}
The adversarial problem initially has been investigated by \citet{huang17} and \citet{kos17} concurrently. In this work the authors show that perturbations computed via FGSM result in extreme performance loss on the learnt policy. \citet{lin17} and \citet{sun20} focused on timing strategies in the adversarial formulation and utilized the \citet{carlini17} method to produce the perturbations. While there is a reasonable body of work focused on finding efficient and effective adversarial perturbations, a substantial body of work focused on building agents robust to these perturbations.
While several studies focused on specifically crafted adversarial perturbations, some approached the robustness problem more widely and considered natural directions in a given MDP \cite{korkmaz2023aaai}.
\citet{pinto17} modeled the adversarial interaction as a zero sum game and proposed a joint training strategy to increase robustness in the continuous action space setting. Recently, \citet{glaeve19} considered an adversary who is allowed to take natural actions in a given environment instead of $\ell_p$-norm bounded perturbations and modeled the adversarial relationship as a zero sum Markov game.
However, recent concerns have been raised on the robustness of adversarial training methods by \citet{korkmazuai,korkmaz22,korkmaz2023aaai}. In this line of work the authors show that the state-of-the-art adversarial training techniques end up learning similar non-robust features that allow black-box attacks against adversarially trained policies, and further that adversarial training results in deep reinforcement learning policies with substantially limited generalization capabilities as compared to vanilla training.
 Thus, with the rising concerns on robustness of recent proposed adversarial training techniques, our work aims to solve the adversarial problem from a different perspective by detecting adversarial directions.

\section{Identification of Non-Robust Directions (INRD)}
In this section we give the high-level motivation for and formal description of our identification method. We begin by introducing necessary notation and definitions. We denote an original base state by $\bar s$ and an adversarially perturbed state by $s^{\textrm{adv}}$.
\begin{definition}
	The \emph{cost of a state}, $J(s,\tau)$, is defined as the
	cross entropy loss between the policy $\pi(a\rvert s)$ of the agent, and a target distribution on actions $\tau(a)$.
	\begin{equation}
	J(s,\tau) = -\sum_a{\tau(a)\log(\pi(a \rvert s))}
	\end{equation}
\end{definition}
\begin{definition}
	The \emph{argmax policy}, $\pi^*(a \rvert s)$, is defined as the distribution which puts all probability mass on the highest weight action of $\pi(a \rvert s)$.
	\begin{equation}
	\pi^*(a \rvert s) = \mathbbm{1}_{a = \argmax_{a'}\pi(a'\rvert s)}
	\end{equation}
\end{definition}
We use the following notation for the gradient and Hessian with respect to states $s$:
\begin{align*}
\nabla_s J(s_0,\tau_0) = \nabla_s J(s,\tau)\rvert_{s = s_0, \tau = \tau_0}   \\
\nabla_s^2 J(s_0,\tau_0) = \nabla_s^2 J(s,\tau)\rvert_{s = s_0, \tau = \tau_0}
\end{align*}

\subsection{First-Order Identification of Non-Robust Directions (FO-INRD)}
As a naive baseline we first describe an identification method based on estimating how much the cost function $J(s,\tau)$ varies under small perturbations.
Prior work of \cite{roth19,hu19} has shown that the behavior of deep neural network classifiers under small, random perturbations is different at base versus adversarial examples. Therefore, a natural baseline detection method is: given an input state $s_0$ sample a small random perturbation $\eta \sim \gN(0,\eps I)$ and compute,
\begin{equation}
\mathcal{K}(s_0,\eta) = J(s_0 + \eta,\pi^*(\cdot \rvert s_0)) - J(s_0,\pi^*(\cdot \rvert s_0)).
\end{equation}
The first-order identification method proceeds by first estimating the mean and the variance of $\mathcal{K}$ over a base run (i.e. unperturbed) of the agent in the environment. Next a threshold $t$ is chosen so that a desired false positive rate (FPR) is achieved (i.e. some desired fraction of the states in the base run lie more than $t$ standard deviations from the mean). Finally, at test time a state encountered by the agent is identified as adversarial if it is at least $t$ standard deviations away from the mean. Otherwise the state is identified as a base observation. As a first attempt, the first-order method can be naturally interpreted as a finite-difference approximation to the magnitude of the gradient at $s_0$. If we assume that the first-order Taylor approximation of $J$ is accurate in a ball of radius $r > \eps$ centered at $s_0$, then
\begin{align*}
J(s_0 + \eta,\pi^*(\cdot \rvert s_0)) &\approx J(s_0,\pi^*(\cdot \rvert s_0)) \\&+ \nabla_s J(s_0,\pi^*(\cdot \rvert s_0))\cdot \eta. \nonumber
\end{align*}
\vskip-0.2in
Therefore,
\begin{equation}
\mathcal{K}(s_0,\eta) \approx \nabla_s J(s_0,\pi^*(\cdot \rvert s_0))\cdot \eta.
\end{equation}
Thus, for $\eta \sim \gN(0,\eps I)$ the test statistic $\mathcal{K}(s_0,\eta)$ is approximately distributed as a Gaussian with mean $0$ and variance $\eps^2\lVert\nabla_s J(s_0,\pi^*(\cdot \rvert s_0))\rVert^2$.
Under this interpretation one would expect the test statistics for base and adversarial states to have the same mean with potentially different standard deviations, possibly making it hard to distinguish base from adversarial. However, this is not what we observe empirically, and in fact the first-order method does a decent job of detecting adversarial examples. The method works because, in fact, the mean of $\mathcal{K}(\bar{s},\eta)$ for base examples $\bar{s}$ is reasonably well separated from the mean of $\mathcal{K}(s^{\rm{adv}},\eta)$ for adversarial examples $s^{\rm{adv}}$. The empirical performance of the first-order method thus indicates that the assumption of accuracy for the first-order Taylor approximation of $J$ does not hold in practice. This leads naturally to the consideration of information on the second derivatives (i.e. the local quadratic approximation) of $J$ in order to identify non-robust directions.

\subsection{Second-Order Identification of Non-Robust Directions (SO-INRD)}
\label{sodata}
The second-order identification method is based on measuring the local curvature of the cost function $J(s,\tau)$. The method exploits the fact that $J(s,\tau)$ will have larger negative curvature at a base sample as compared to an adversarial sample. In particular, the high level theoretical motivation for this approach is that adversarial samples are the output of a local optimization procedure which attempts to find a nearby perturbed state $s^{\textrm{adv}}$ with a low value for the cost $J(s^{\textrm{adv}},\tau)$ for some $\tau \neq \pi^*(\cdot \rvert \bar{s})$. A direction of large negative curvature for $J(s^{\textrm{adv}},\tau)$ indicates that a very small perturbation along this direction could dramatically decrease the cost function. Therefore, such points are likely to be unstable for local optimization procedures attempting to minimize the cost function in a small neighborhood. On the other hand, the curvature of $J(s,\tau)$ at a base state $\bar{s}$ is determined by the overall algorithm used to train the deep reinforcement learning agent. This algorithm optimizes the parameters of the neural network policy while considering all states visited during training, and thus is not likely to be heavily overfit to the state $\bar{s}$.
In particular, we expect larger negative curvature at $\bar{s}$ than at an adversarial state observation $s^{\textrm{adv}}$. We make the connection between negative curvature and instability for local optimization formal in Section \ref{sec:curvature-math}. Based on the above discussion, a natural choice of metric for distinguishing adversarial versus base samples is the most negative eigenvalue of the Hessian $\lambda_{\textrm{min}}\left(\nabla_{s}^2 J(s_0,\pi^*(\cdot \rvert s_0)\right)$.
While this is the most natural measurement of curvature, it requires computing the eigenvalues of a matrix whose number of entries are quadratic in the input dimension. Since the input is very high-dimensional, and we would like to perform this computation in real-time for every state visited by the agent, computing the value $\lambda_{\textrm{min}}$ is computationally prohibitive. Instead we approximate this value by measuring the curvature along a direction which is correlated with the negative eigenvectors of the Hessian.
Given this direction, the value that we measure is the accuracy of the first order Taylor approximation of the cost of the given state  $J(s,\tau)$. We denote the first order Taylor approximation at the state $s_0$ in direction $\eta$ by
\begin{equation}
  \tilde{J}(s_0 , \eta) = J(s_0, \pi^*(\cdot \rvert s_0)) +\nabla_{s} J(s_0,\pi^*(\cdot \rvert s_0)) \cdot \eta. \nonumber
\end{equation}
The metric we will use to detect adversarial samples is the finite-difference approximation
\begin{equation}
\label{metric}
\mathcal{L}(s_0,\eta) = J(s_0 + \eta, \pi^*(\cdot \rvert s_0)) - \tilde{J}(s_0 , \eta).
\end{equation}
To see formally that Equation (\ref{metric}) gives an approximation of the most negative eigenvector of the Hessian, we will assume that the cost function $J(s,\tau)$ is well approximated by its quadratic Taylor approximation at the point $s_0$ i.e.
\begin{align*}
J(s_0 + \eta,\pi^*(\cdot \rvert s_0)) &\approx  J(s_0,\pi^*(\cdot \rvert s_0)) \\&+ \nabla_{s} J(s_0,\pi^*(\cdot \rvert s_0)) \cdot \eta  \\
&+ \eta^\top \nabla^2_{s} J(s_0,\pi^*(\cdot \rvert s_0)) \eta
\end{align*}
for sufficiently small perturbation $\eta$. Substituting the above formula into Equation (\ref{metric}) yields
\begin{equation}
\mathcal{L}(s_0,\eta) \approx \eta^\top \nabla^2_{s} J(s_0,\pi^*(\cdot \rvert s_0))\eta
\end{equation}
The above quadratic form is minimized when $\eta$ lies in the same direction as the most negative eigenvector of the Hessian, in which case
\begin{equation}
\mathcal{L}(s_0,\eta) \approx \lambda_{\textrm{min}}\left(\nabla^2_{s} J(s_0,\pi^*(\cdot \rvert s_0))\right) \norm{\eta}_2^2
\end{equation}
We choose the sign of the gradient direction for measuring the accuracy of the first order Taylor approximation. To motivate this choice note that $-\nabla_s J(s,\tau)$ is locally the direction of steepest decrease for the cost function. If the gradient direction additionally has negative curvature of large magnitude, then small perturbations along this direction will result in even more rapid decrease in the cost function value than predicted by the first-order gradient approximation. Note that this can be true even if the gradient itself has small magnitude, as long as the negative curvature is large enough. Thus, by the discussion at the beginning of Section \ref{sodata}, adversarial examples are likely to have relatively smaller magnitude negative curvature in the gradient direction than base examples.
Formally, for $\epsilon > 0$ we set
\begin{equation}
\eta(s_0) = \epsilon\dfrac{\sign\left(\nabla_{s} J(s_0,\pi^*(\cdot \rvert s_0))\right)}{\lVert\nabla_{s} J(s_0,\pi^*(\cdot \rvert s_0))\rVert_2}.
\end{equation}
To calibrate the detection method we record the mean $\bar{\Ls} = \E_s[\Ls(s,\eta(s))]$ and variance $\sigma^2(\Ls) = \Var_s[\Ls(s,\eta(s))]$ of our proposed test statistic over states from a base run of the policy in the MDP. Then at test time we set a threshold $t>0$, and for each state $s_i$ visited by the agent test if
\begin{equation}
	\lvert\mathcal{L}(s_i,\eta(s_i)) - \bar{\Ls}\rvert > t\sigma(\Ls).
\end{equation}
If the threshold of $t$ standard deviations is exceeded we identify the state $s_i$ as adversarial, and otherwise identify it as a base state observation.
Pseudo-code for the second order method is given in Algorithm \ref{alg:sodata}.
\begin{algorithm}[t]
	\caption{Second Order Identification of Non-Robust Directions (SO-INRD)}
	\label{alg:sodata}
	\begin{algorithmic}
		\STATE {\bfseries Input:} The base run mean $\bar{\Ls}$ and variance $\sigma^2(\Ls)$, identification threshold $t > 0$, parameter $\epsilon > 0$.
		\FOR{states $s_i$ visited by deep reinforcement learning policy}
		\STATE $\eta_i = \epsilon\dfrac{\sign(\nabla_{s} J(s_i,\pi^*(\cdot \rvert s_i)))}{\lVert\nabla_{s} J(s_i,\pi^*(\cdot \rvert s_i))\rVert_2}$
		\STATE $\tilde{J}(s_i , \eta_i) = J(s_i, \pi^*(\cdot \rvert s_i)) + \nabla_{s} J(s_i,\pi^*(\cdot \rvert s_i)) \cdot \eta_i$
		\STATE $\Ls(s_i,\eta_i) = J(s_i + \eta_i, \pi^*(\cdot \rvert s_i)) - \tilde{J}(s_i , \eta_i)$
		\IF{$\lvert\Ls(s_i,\eta_i) - \bar{\Ls}\rvert > t\cdot \sigma(\Ls)$}
		\STATE Identify state $s_i$ as a non-robust state observation
		\ENDIF
		\ENDFOR
	\end{algorithmic}
\end{algorithm}

\begin{figure*}[t]
\setkeys{Gin}{width=0.4\linewidth}
\includegraphics[width=0.24\linewidth]{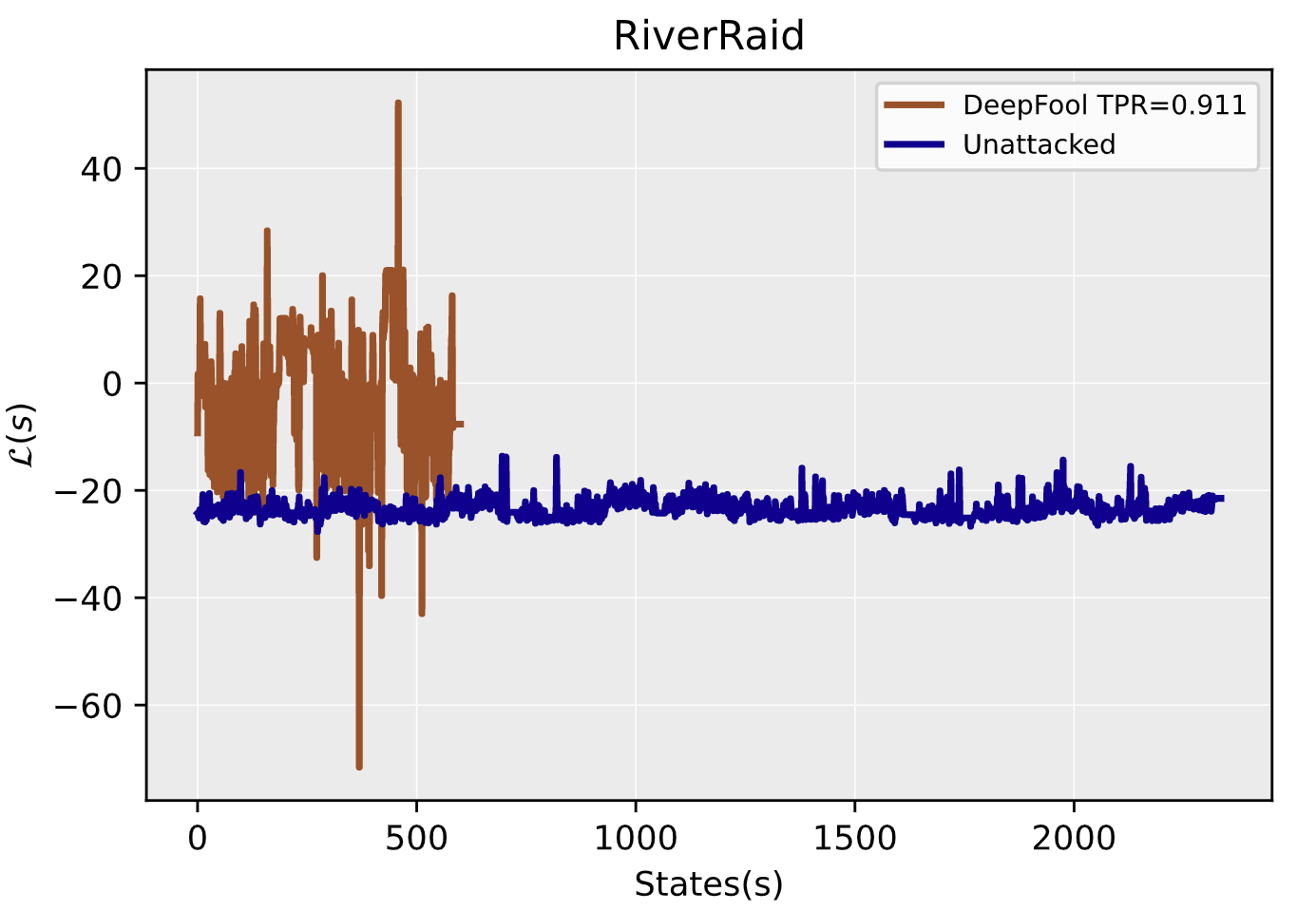}\,%
\includegraphics[width=0.24\linewidth]{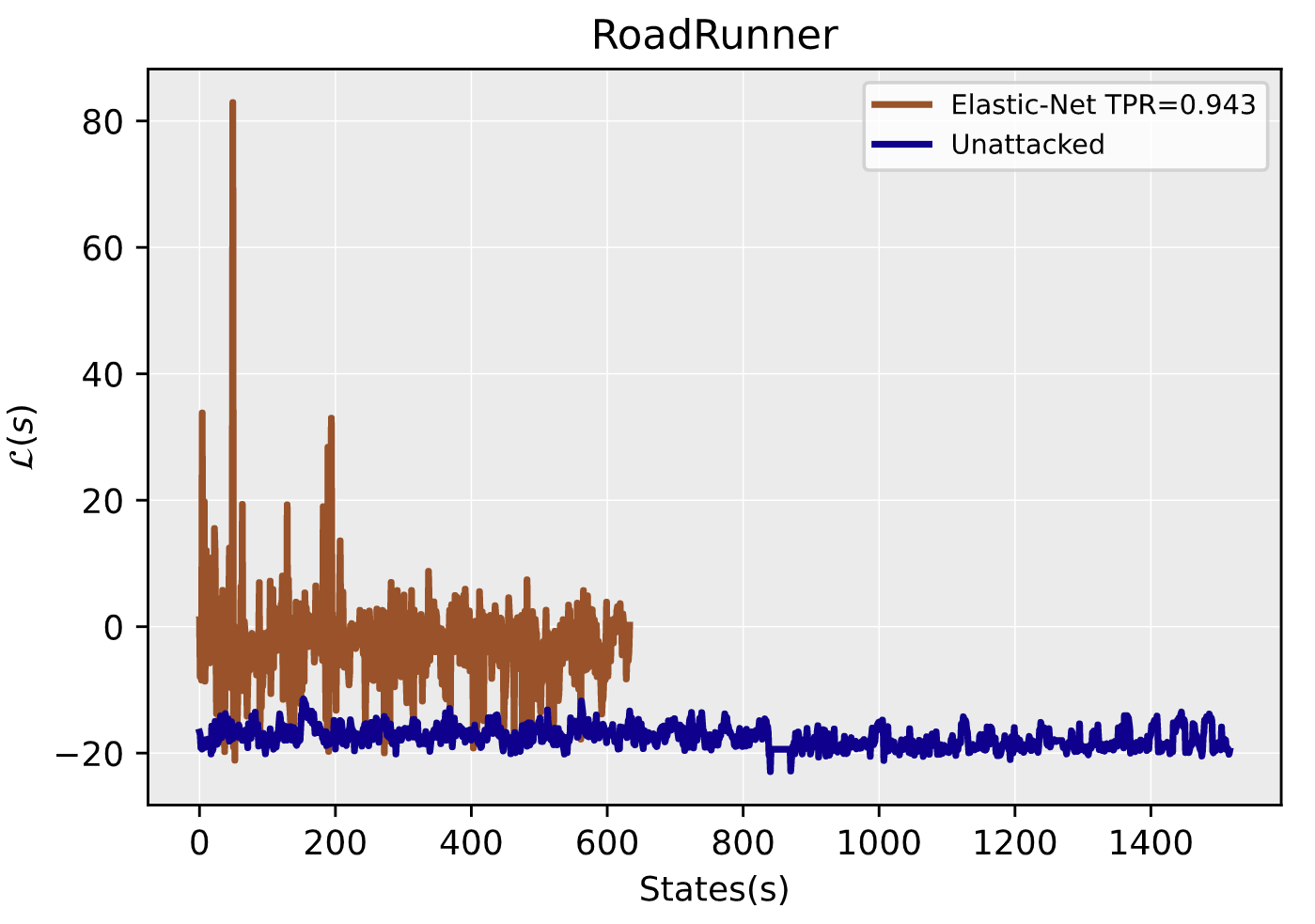}\,%
\includegraphics[width=0.24\linewidth]{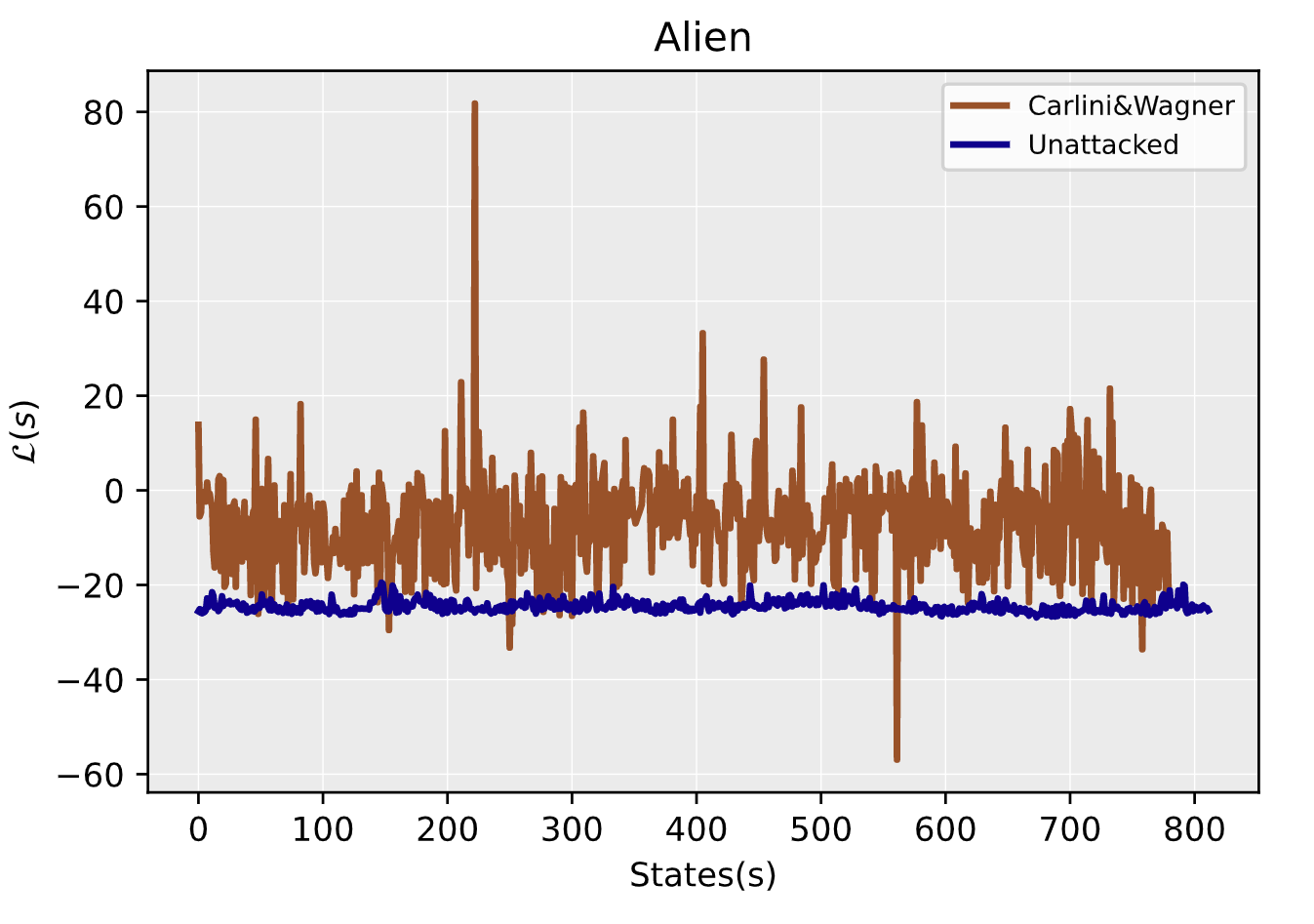}\,
\includegraphics[width=0.25\linewidth]{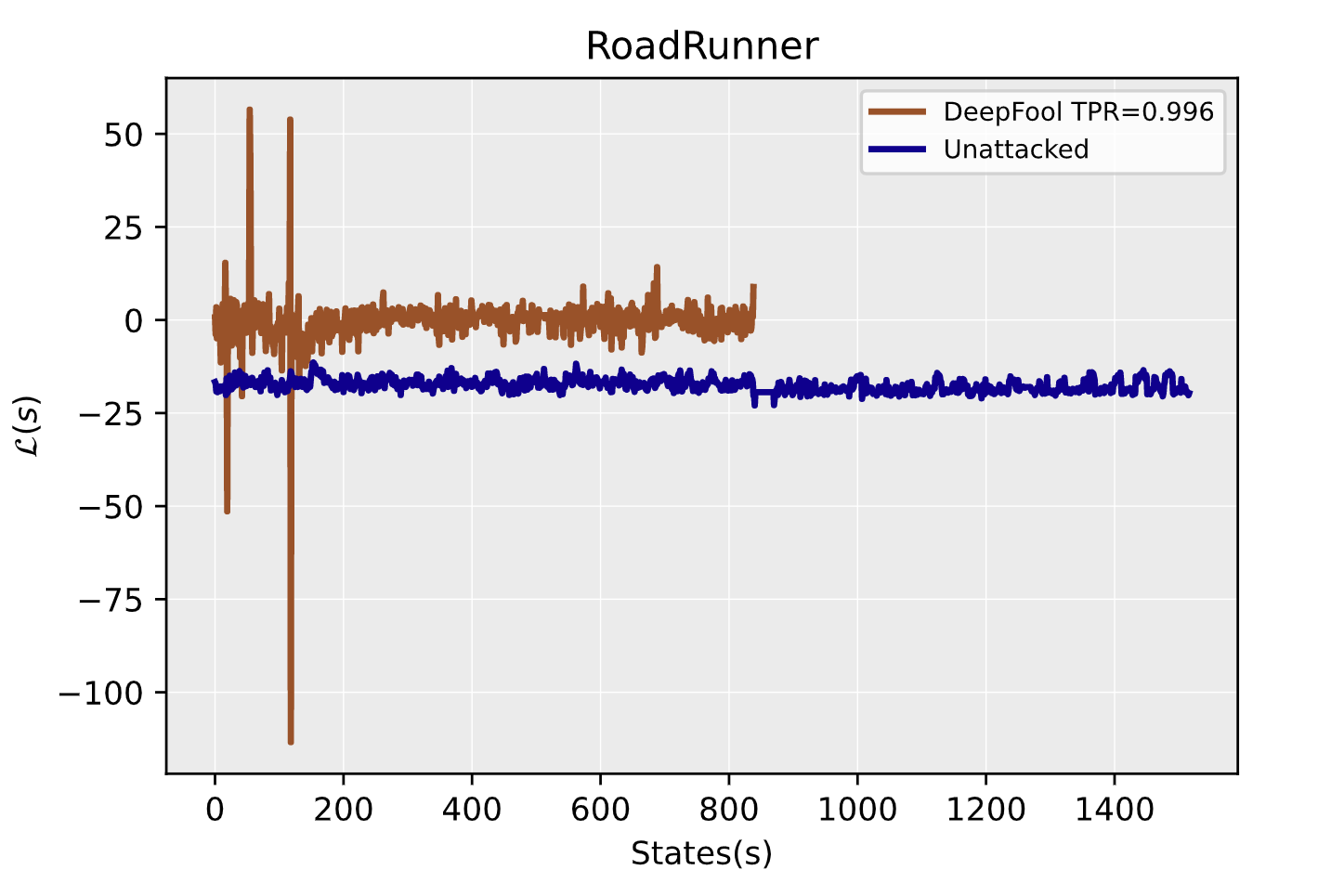} \\
\vskip-0.16in
\includegraphics[width=0.25\linewidth]{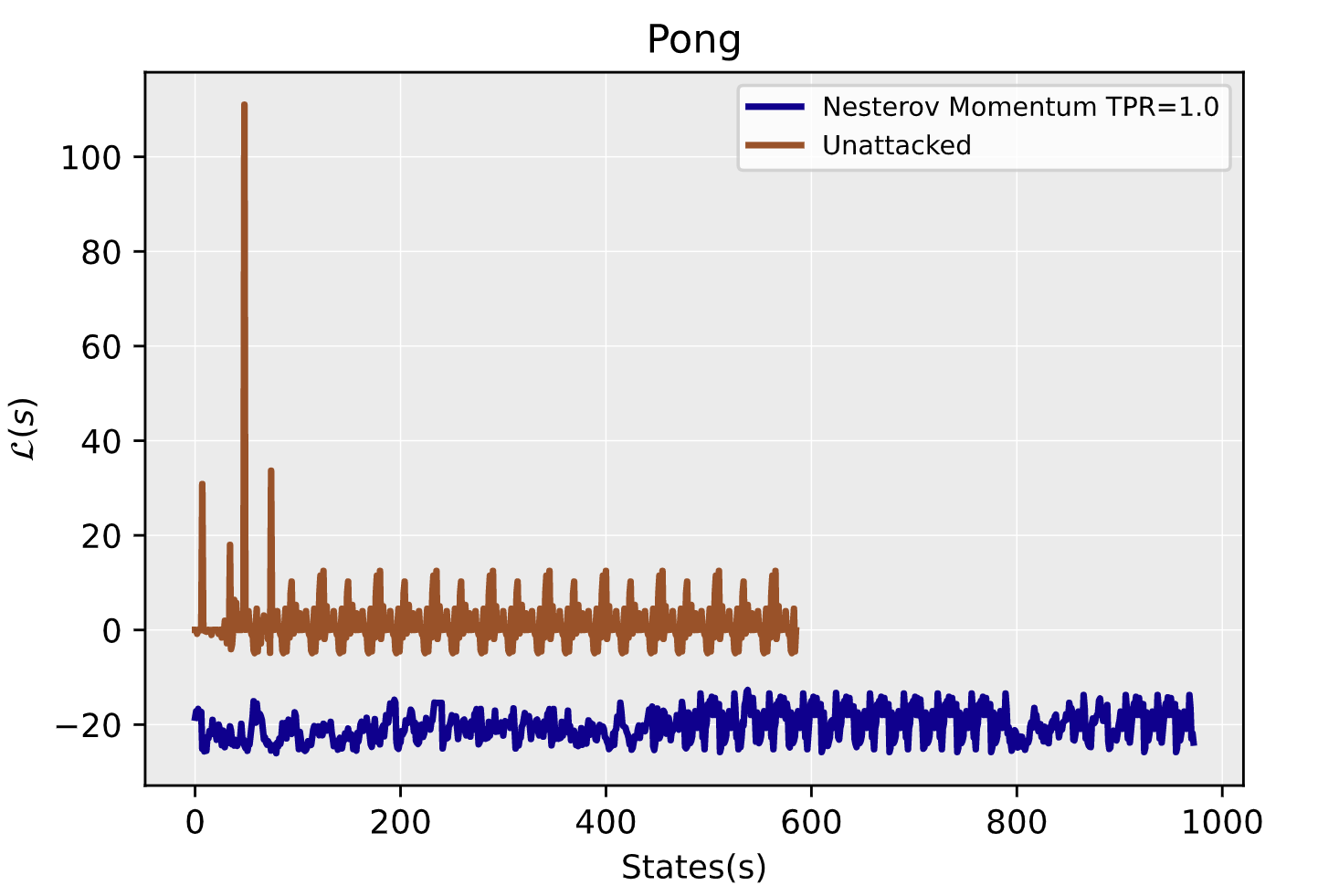}\,%
\includegraphics[width=0.24\linewidth]{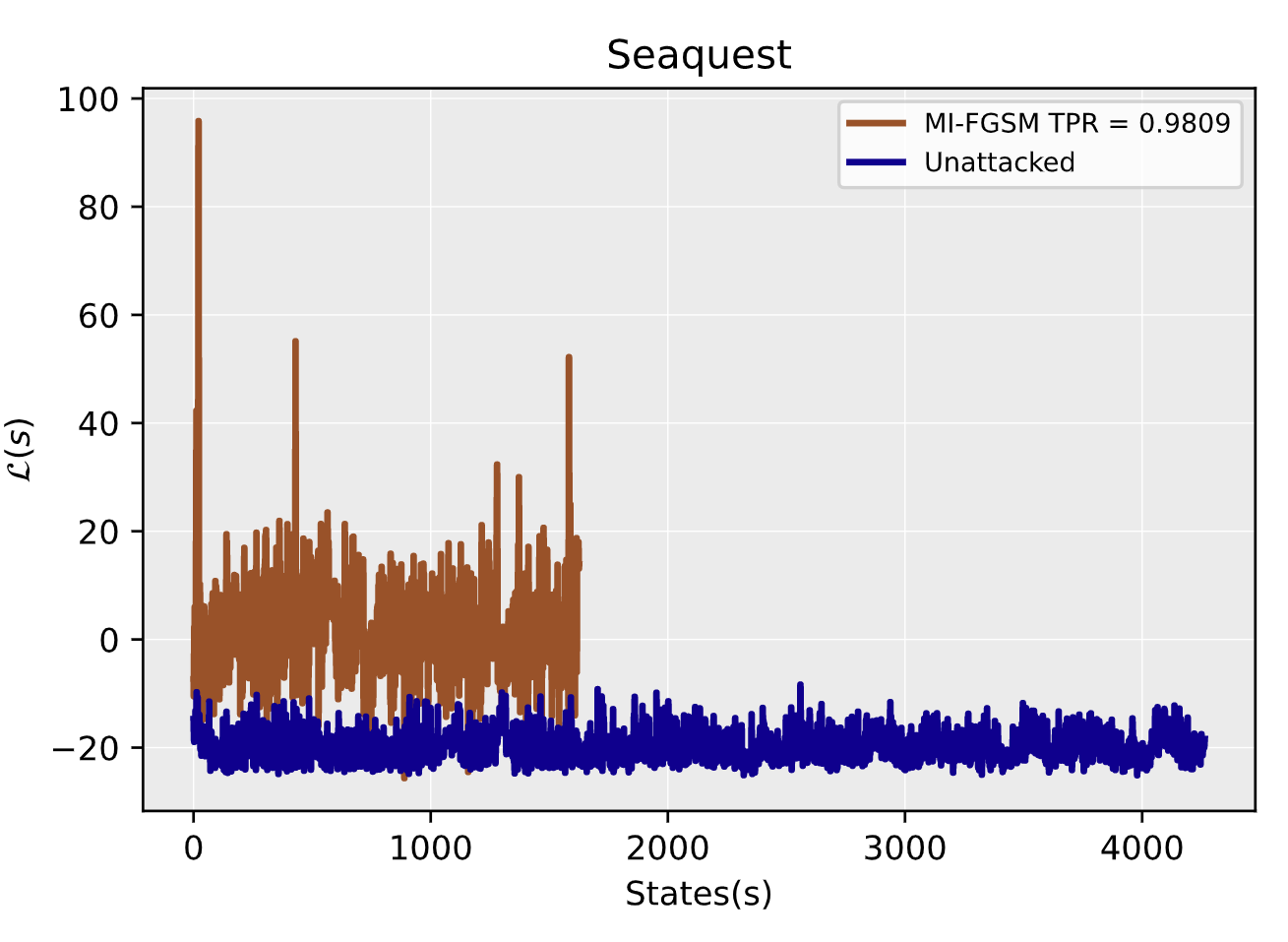}\,%
\includegraphics[width=0.24\linewidth]{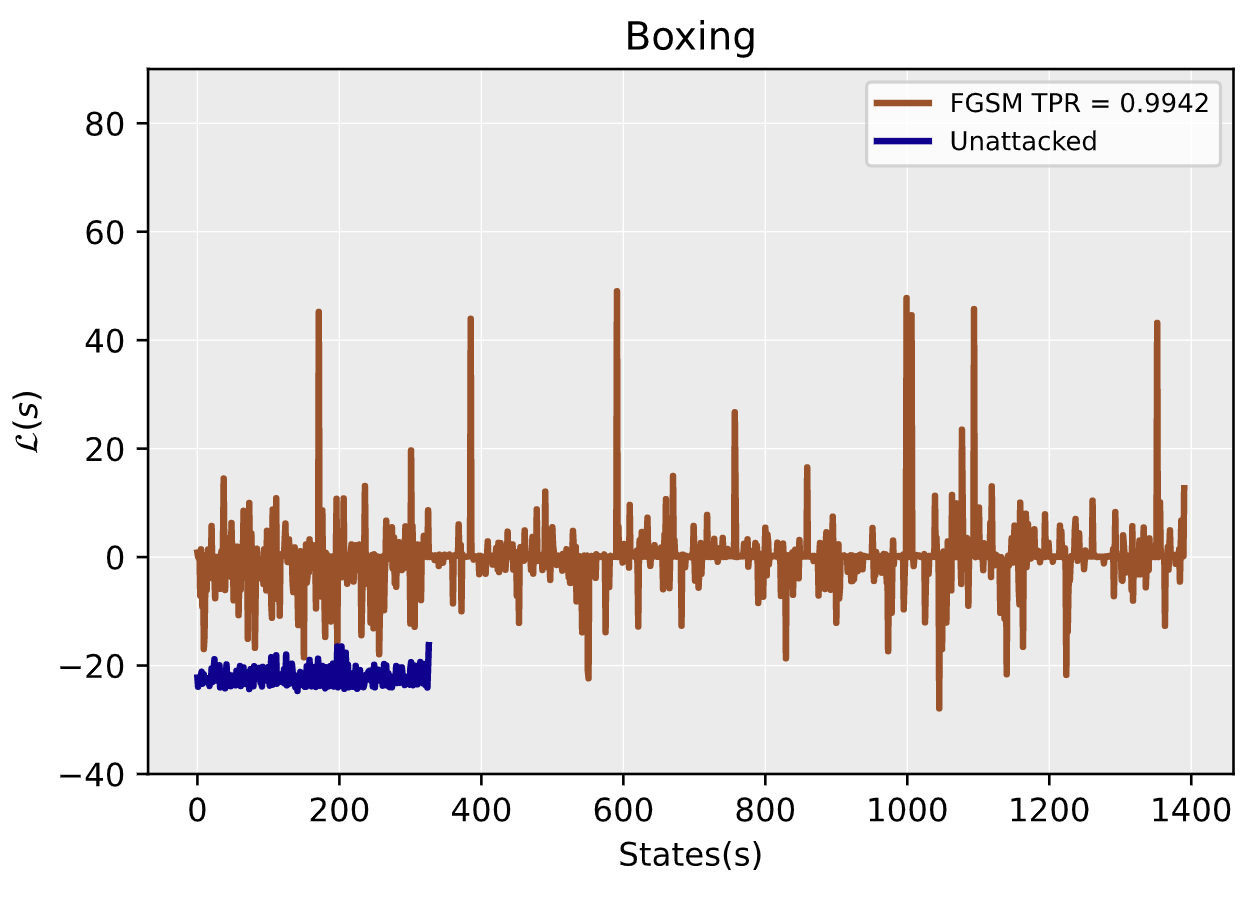}\,
\includegraphics[width=0.24\linewidth]{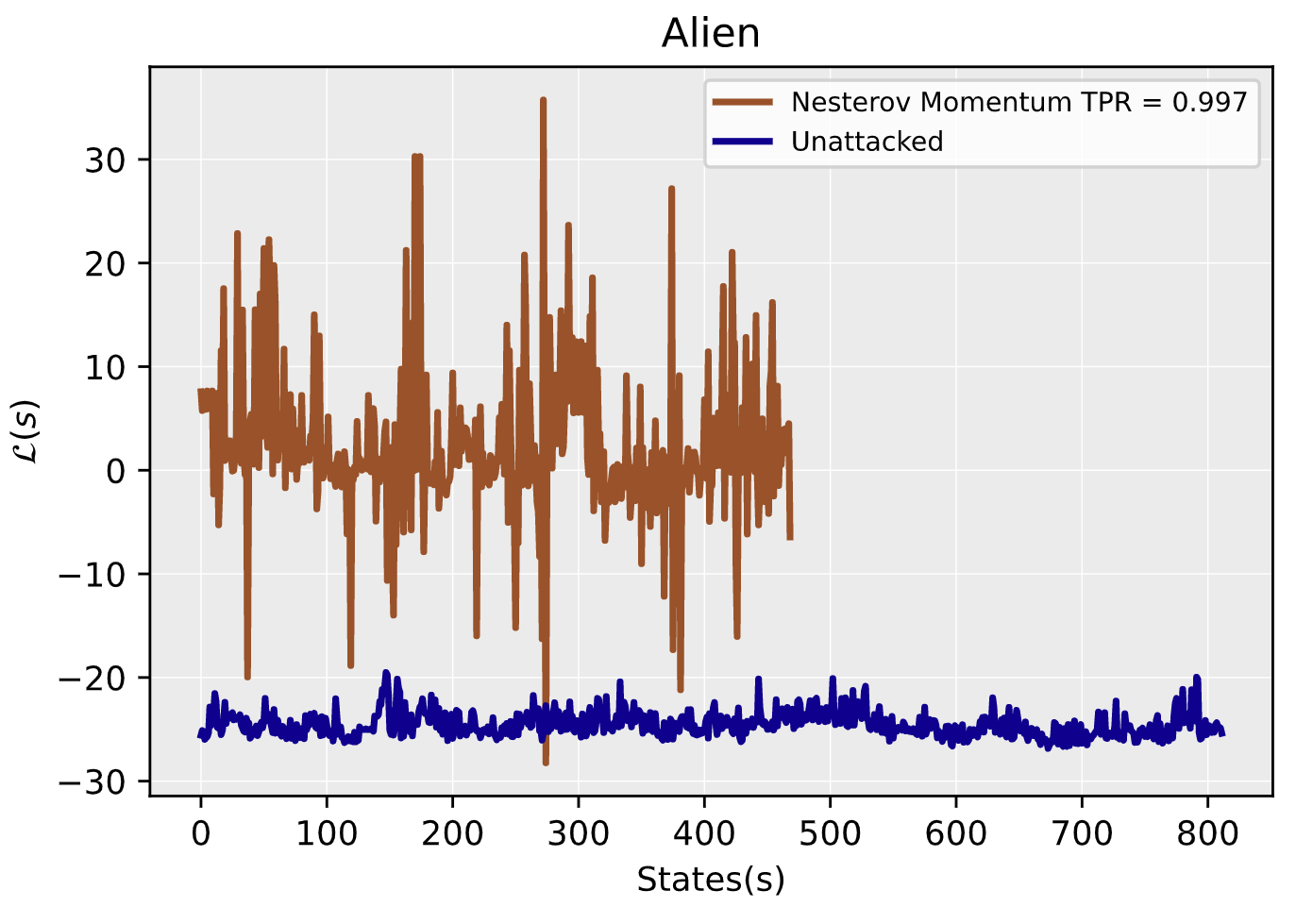}
\vskip-0.14in
\caption{$\mathcal{L}(s)$ for our proposed method SO-INRD vs visited states with corresponding TPR values for the following attack methods: FGSM, MI-FGSM, Nesterov, DeepFool, Carlini\&Wagner, Elastic Net Method. TPR values shown in the upper right box of the figure when FPR is equal to $0.01$.}
\label{lsresults}
\end{figure*}
\subsection{Negative Curvature and Instability of Local Optimization}
\label{sec:curvature-math}
In this section we formalize the connection between negative curvature and instability for local optimization procedures that motivated our definition of $\Ls(s,\eta)$. Given a state $s_0$ and a target distribution $\tau \neq \pi^*(\cdot \rvert s_0)$, we assume the adversary is trying to find a state $s^{\rm{adv}}$ minimizing $J(s^{\rm{adv}},\tau)$ among all states close to $s_0$ by some metric. Formally, let $D_{s_0}(s)\geq 0$ be a convex function of $s$ that should be thought of as measuring distance to $s_0$. One standard choice for the distance function is $D_{s_0}(s) = \lVert s - s_0 \rVert_p^p$. We model the adversary as minimizing the loss
\begin{equation}
\label{advopt}
	f(s) = J(s,\tau) + D_{s_0}(s).
\end{equation}
In particular, we make the following assumption:
\begin{assumption}
	\label{adv-assumption}
	The adversarial state $s^{\rm{adv}}$ is a local minimum of $f(s)$.
\end{assumption}
Of course this assumption is violated in practice since different methods used to compute adversarial directions optimize objective functions other than $f$, and do not necessarily always converge to a local minimum. Nevertheless the assumption allows us to make formal qualitative predictions about the behavior of the second-order identification method that correspond well with empirical results across a broad variety of methods for generating adversarial directions.
We now state our main result lower bounding the curvature of $J(s^{\rm{adv}},\tau)$.
\begin{proposition}
	\label{curvature-prop}
	For $c > 0$ assume that the maximum eigenvalue of the Hessian $\nabla_s^2 D_{s_0}(s)$ is bounded by $c$. If $s^*$ is a local minimum of $f(s)$ then $\lambda_{\textrm{min}}(\nabla_s^2 J(s^*,\tau)) \geq -c$
\end{proposition}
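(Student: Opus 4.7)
The plan is to use the standard second-order necessary condition for a local minimum together with a simple eigenvalue comparison. Since $f(s) = J(s,\tau) + D_{s_0}(s)$ is a sum, its Hessian decomposes as
\begin{equation*}
\nabla_s^2 f(s) = \nabla_s^2 J(s,\tau) + \nabla_s^2 D_{s_0}(s),
\end{equation*}
so controlling $\nabla_s^2 J$ reduces to controlling $\nabla_s^2 f$ and $\nabla_s^2 D_{s_0}$ separately.

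First I would invoke the second-order necessary optimality condition: if $s^*$ is a local minimum of the (twice differentiable) function $f$, then $\nabla_s^2 f(s^*) \succeq 0$. Rearranging the decomposition above at $s = s^*$ gives $\nabla_s^2 J(s^*,\tau) \succeq -\nabla_s^2 D_{s_0}(s^*)$ in the Loewner order.

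Next I would translate this operator inequality into a statement about the minimum eigenvalue. By the Courant–Fischer variational characterization, for any unit vector $v$,
\begin{equation*}
v^\top \nabla_s^2 J(s^*,\tau) v \;\geq\; -\,v^\top \nabla_s^2 D_{s_0}(s^*) v \;\geq\; -\lambda_{\max}\bigl(\nabla_s^2 D_{s_0}(s^*)\bigr) \;\geq\; -c,
\end{equation*}
using the hypothesis that the maximum eigenvalue of $\nabla_s^2 D_{s_0}$ is bounded by $c$. Minimizing the left-hand side over unit $v$ then yields $\lambda_{\min}(\nabla_s^2 J(s^*,\tau)) \geq -c$, as desired.

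There is no real obstacle here beyond checking the minor technical point that $J$ and $D_{s_0}$ are smooth enough at $s^*$ for the second-order necessary condition to apply; this is implicit in the way the Hessians are written in the statement. The convexity of $D_{s_0}$ is not strictly needed for the argument (only the eigenvalue bound on its Hessian is used), but it is consistent with the hypothesis that $\nabla_s^2 D_{s_0}$ has bounded operator norm from above, which is the only property actually invoked.
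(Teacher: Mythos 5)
Your argument is correct and is essentially the paper's own proof: both use the second-order necessary condition $\nabla_s^2 f(s^*) \succeq 0$, the additive decomposition of the Hessian, and the bound $\lambda_{\max}(\nabla_s^2 D_{s_0}) \leq c$; the paper simply plugs in the specific unit eigenvector attaining $\lambda_{\min}(\nabla_s^2 J(s^*,\tau))$ rather than phrasing it via Courant--Fischer over all unit vectors. No gaps.
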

\begin{proof}
	 Let $v$ be the eigenvector of $\nabla_s^2 J(s^*,\tau)$ corresponding to the minimum eigenvalue. At a local minimum $s^*$ of $f(s)$ the Hessian $\nabla_s^2 f(s^*)$ must be positive semi-definite. Therefore,
	 \begin{align*}
	 	0 &\leq v^{\top}\nabla_s^2 f(s^*)v= v^{\top}\nabla_s^2 J(s^*,\tau)v + v^{\top}\nabla_s^2D_{s_0}(s^*)v\\
	 		&\leq \lambda_{\textrm{min}}(\nabla_s^2 J(s^*,\tau)) + c
	 \end{align*}
	 Rearranging the above inequality completes the proof.
\end{proof}

The second order conditions for a local minimum of $f$ imply a lower bound on the smallest eigenvalue of $\nabla_s^2 J(s^{*},\tau)$. Thus, by Assumption \ref{adv-assumption}, we obtain a lower bound on $\lambda_{\textrm{min}}(\nabla_s^2 J(s^{\rm{adv}},\tau))$.
The assumption that the maximum eigenvalue of the Hessian $\nabla_s^2 D_{s_0}(s)$ is bounded by $c$ is satisfied for example when $D_{s_0}(s) = \frac{c}{2}\lVert s - s_0 \rVert_2^2$. In contrast, the local curvature of the cost function $J(s,\tau)$ at a base sample is determined by an optimization procedure that updates the \emph{weights} $\theta$ of the neural network policy rather than the states $s$.
If we write $J_{\theta}(s,\tau)$ to make explicit the dependence on the weights, then the second order conditions for optimizing the original neural network apply to the Hessian with respect to weights $\nabla_\theta^2 J_{\theta}(s,\tau)$ rather than the Hessian with respect to states $\nabla_s^2 J_{\theta}(s,\tau)$. Additionally, first order optimality conditions can help to justify the choice of $\nabla_s J(s,\tau)$ as a good direction to check for negative curvature.
Indeed by the first order conditions, at a local optimum $s^*$ of $f(s)$ we have
\vskip-0.15in
\begin{equation}
0 = \nabla_s f(s^*) = \nabla_s J(s^*,\tau) + \nabla_s D_{s_0}(s^*).
\end{equation}
Therefore, $\nabla_s J(s^*,\tau) = -\nabla_s D_{s_0}(s^*)$. So assuming the adversary finds a local optimum, $\nabla_s J(s,\tau)$ points in a direction that decreases the distance function $D_{s_0}(s^*)$. Thus sufficiently negative curvature in the direction of $\nabla_s J(s,\tau)$ implies not only that $s$ is not a local minimum of $f$, but also that the distance function $D_{s_0}(s)$ can be decreased by moving along this direction of negative curvature. To summarize, we have shown that second order optimality conditions arising from computing an adversarial example give rise to lower bounds on the smallest eigenvalue of the Hessian $\lambda_{\rm{min}}\left(\nabla^2_s J(s,\tau)\right)$. The function $\Ls(s,\eta)$ used to identify adversarial directions for SO-INRD is a finite difference approximation to
\vskip-0.15in
\[
\eta^\top \nabla^2_s J(s,\tau)\eta \geq \lambda_{\rm{min}}\left(\nabla^2_s J(s,\tau)\right)\norm{\eta}^2.
\]
Therefore the results of this section imply that $\Ls(s,\eta)$ should be larger at adversarial samples than base samples.
\begin{figure*}[htbp]
\setkeys{Gin}{width=0.4\linewidth}
\begin{center}
  \includegraphics[width=0.309\linewidth]{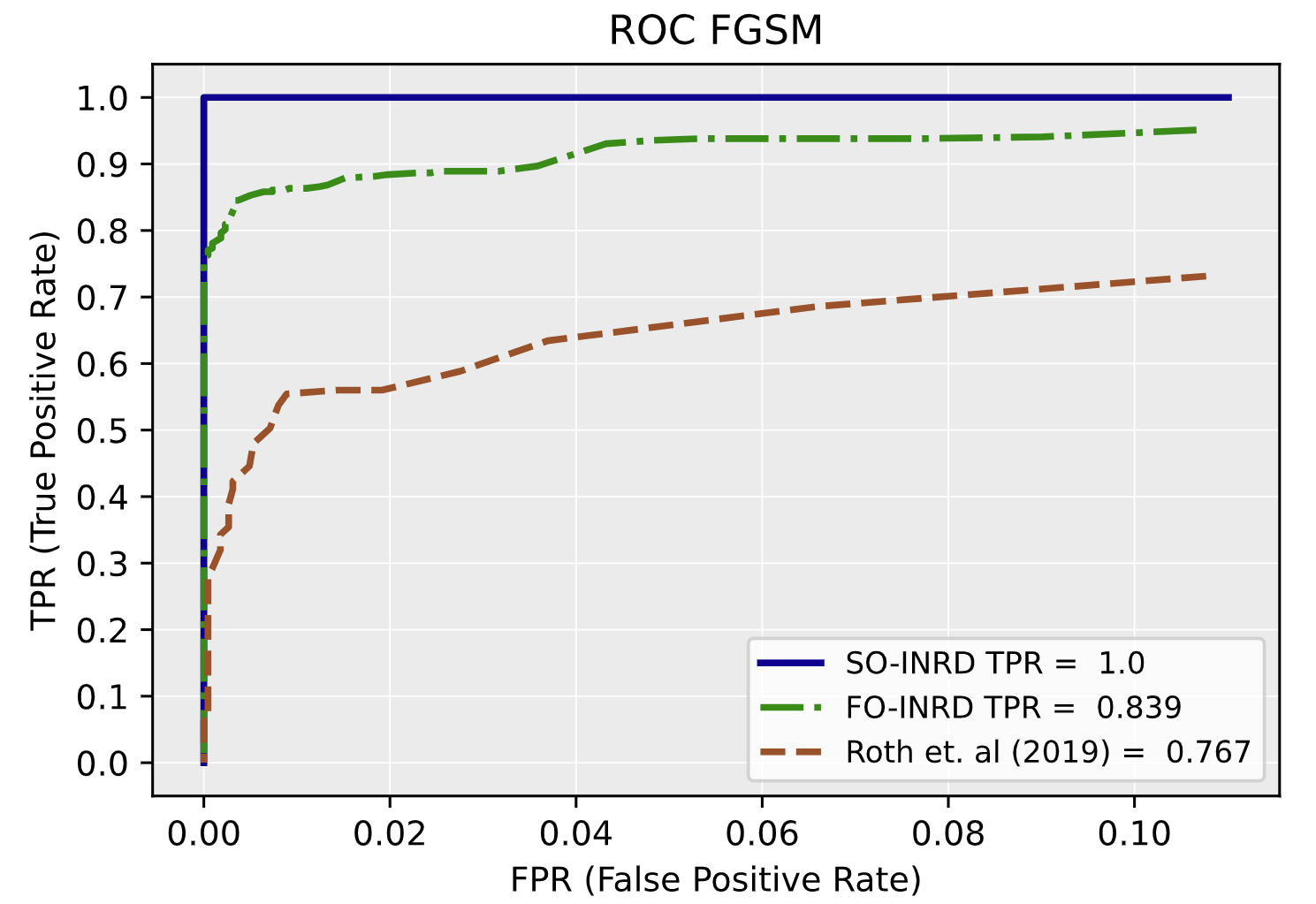}\,%
  \includegraphics[width=0.3\linewidth]{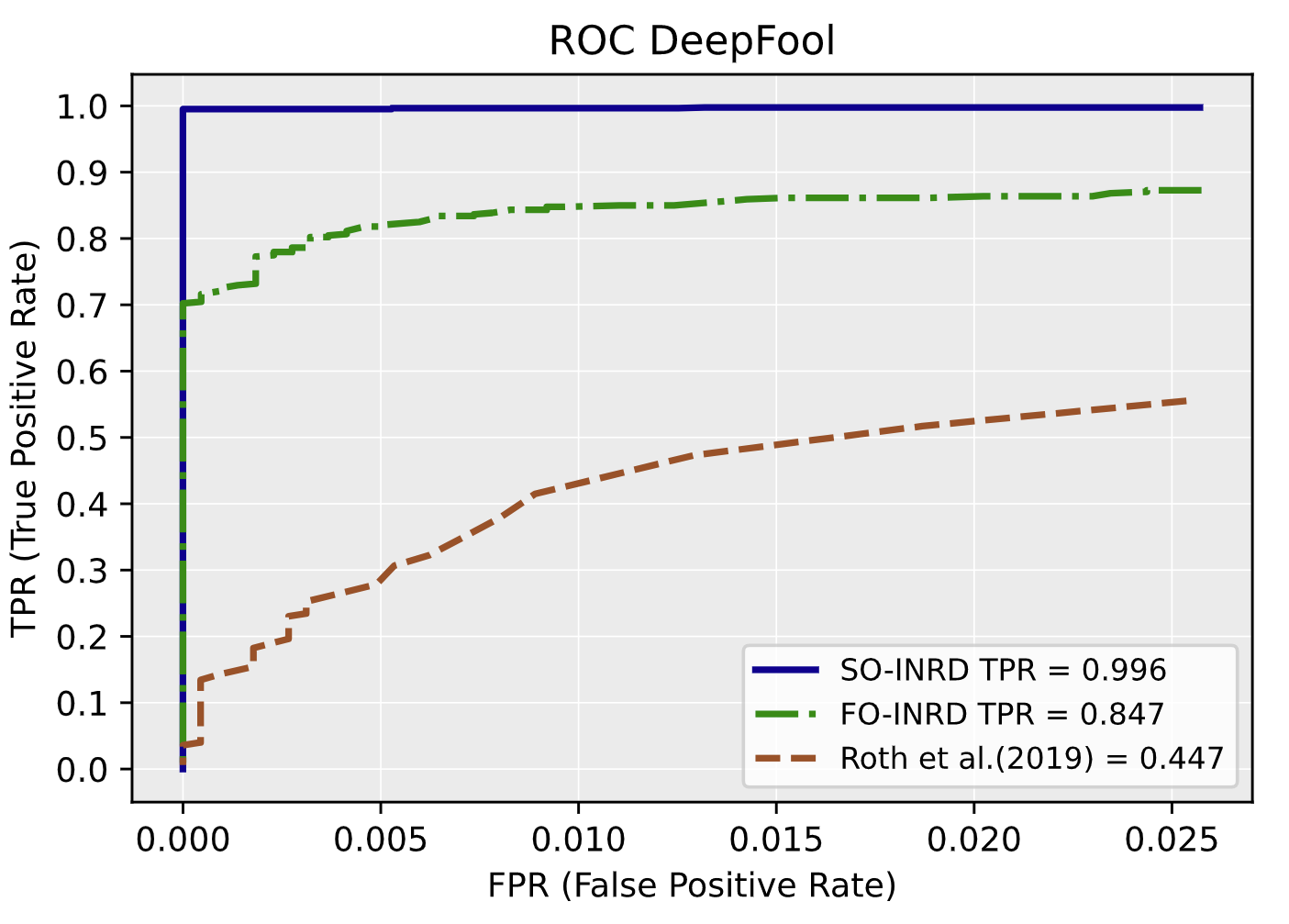}\,%
  \includegraphics[width=0.3\linewidth]{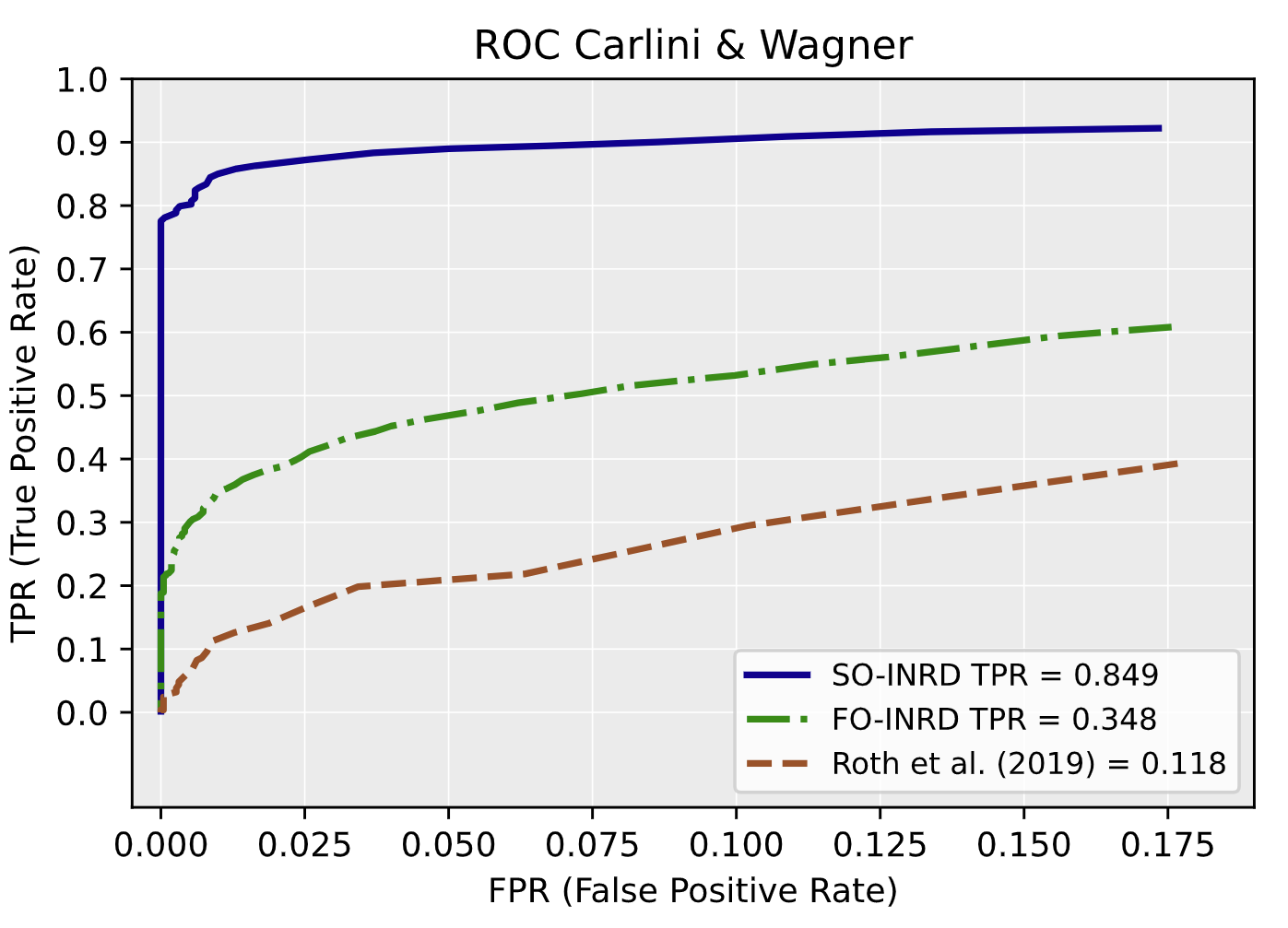} \\
  \includegraphics[width=0.3\linewidth]{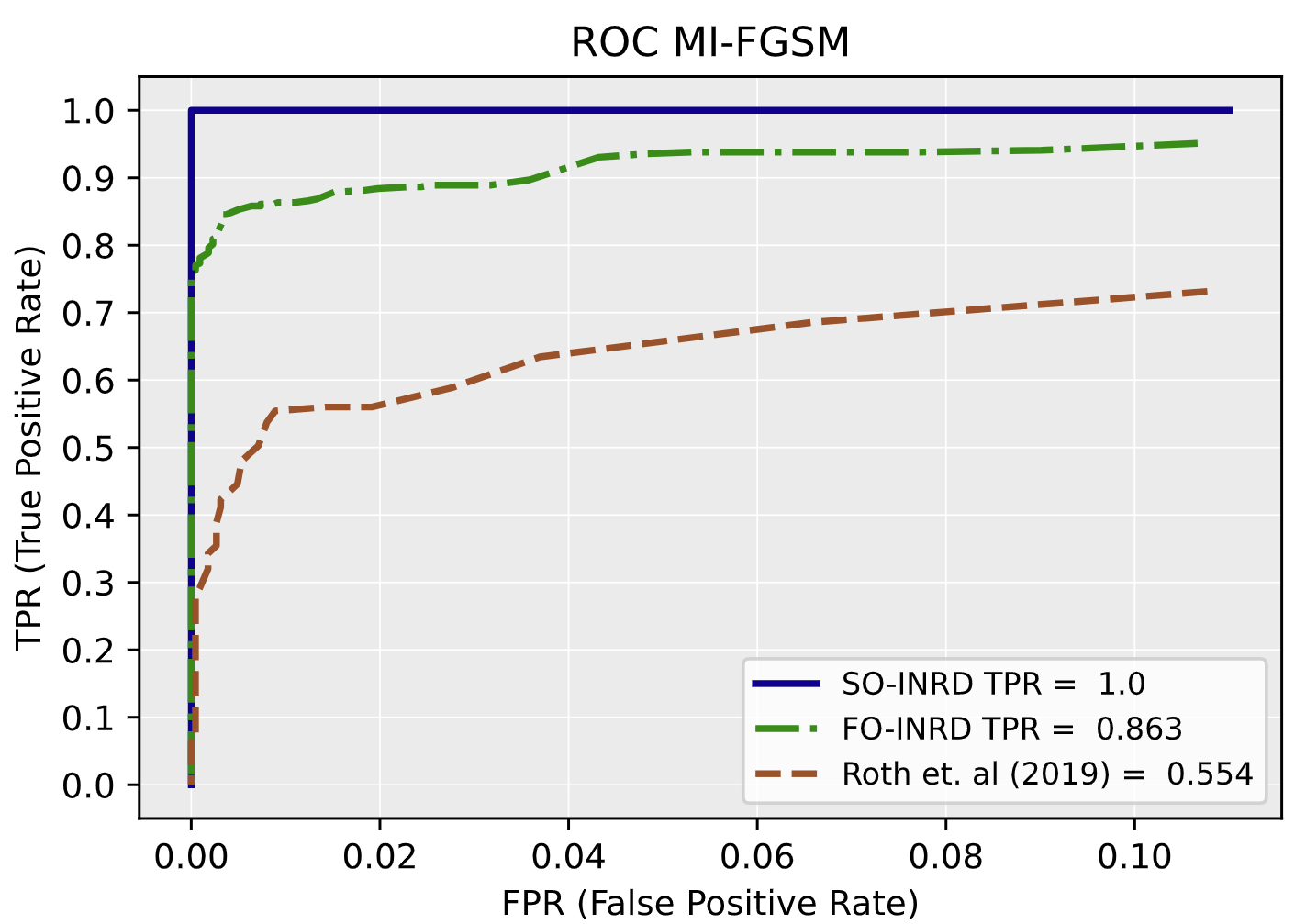}\,%
  \includegraphics[width=0.3\linewidth]{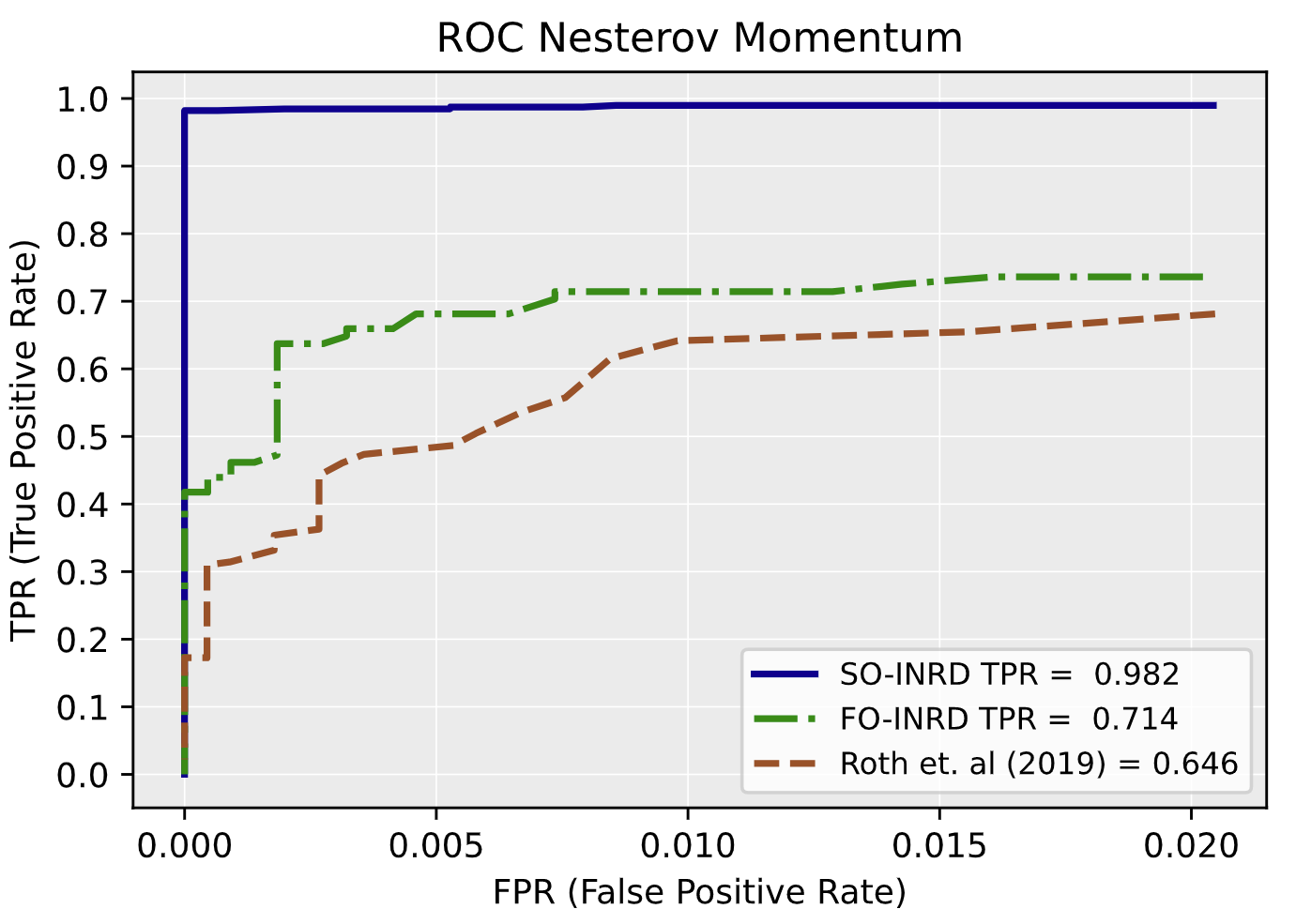}\,%
  \includegraphics[width=0.3\linewidth]{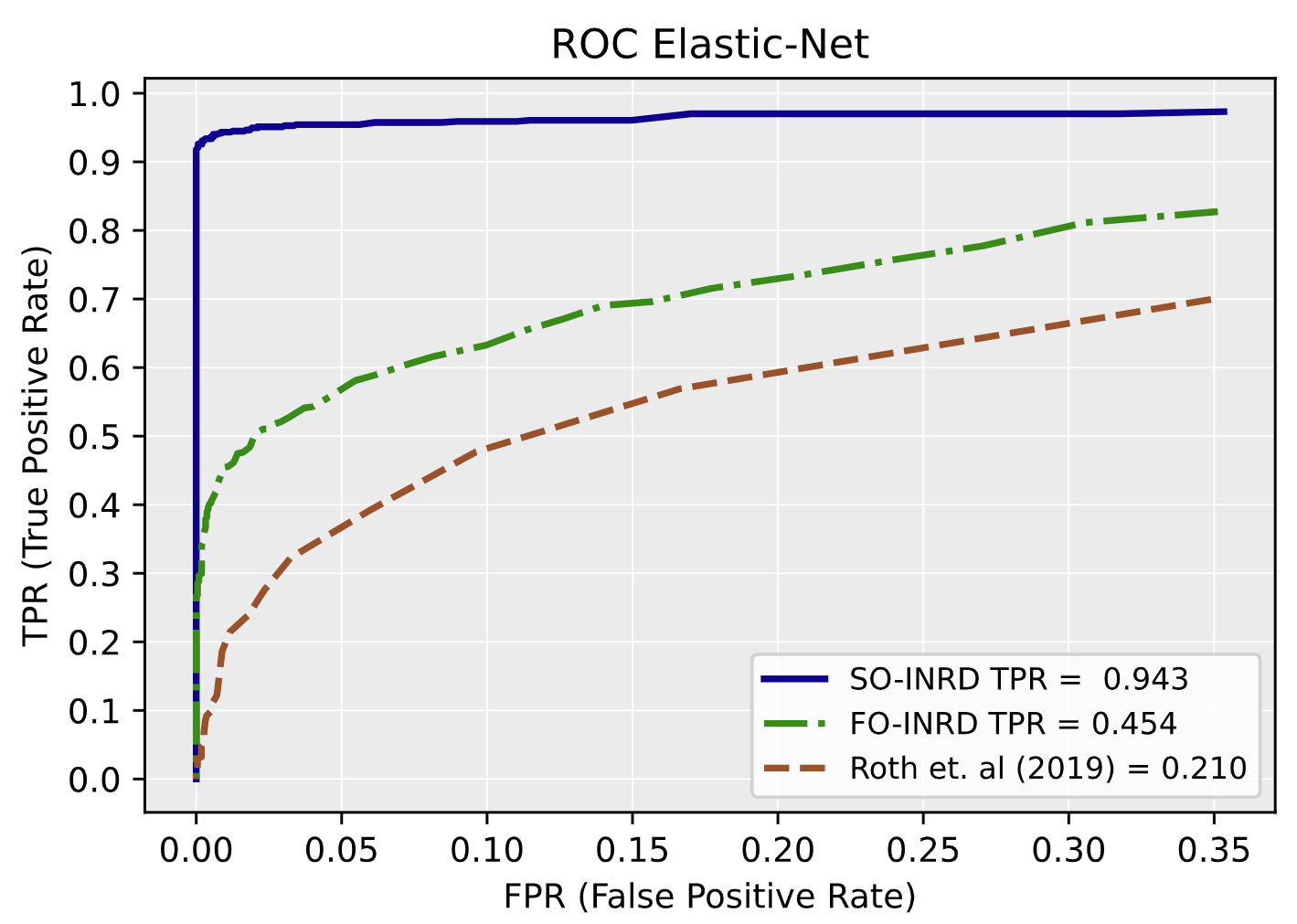}
\end{center}
\vskip -0.14in
\caption{ROC curves of FO-INRD, SO-INRD and OAO method for the following attack methods: FGSM, MI-FGSM, Nesterov Momentum, DeepFool, Carlini\&Wagner, Elastic Net Method in RoadRunner. TPR values shown in the lower right box of the figure when FPR is equal to $0.01$.}
\label{roadroc}
\vskip -0.191in
\end{figure*}
\begin{table*}[t!]
\caption{True Positive Rates (TPR) for FGSM, MI-FGSM, Nesterov Momentum, Carlini\&Wagner, Elastic-Net and DeepFool when False Positive Rate (FPR) is equal to 0.01. The proposed methods SO-INRD and FO-INRD are evaluated, and compared with Roth et al. (OAO) in Riverraid, RoadRunner, Alien, Seaquest, Boxing, Pong, and Robotank games. More results for different FPR values are reported in the supplementary material.}
\vskip 0.05in
\label{table1}
\centering
\scalebox{0.87}{
\begin{tabular}{lcccccccr}
\toprule
Detection Method-Attack Method     & RiverRaid
                         						& RoadRunner
				        						 & Alien
				        						 & Seaquest
				       						  &  Boxing
				       						  &  Pong
				      						   & Robotank\\
\midrule
SO-INRD FGSM   & 0.997& 1.0  & 1.0 & 0.995 &0.994&  1.0  &0.999  \\
FO-INRD FGSM  &0.990 &0.843&0.803& 0.931 &0.793&0.622& 0.413  \\
OAO FGSM& 0.681 &0.767&0.885&0.403& 0.264 &0.424& 0.911  \\
\midrule
SO-INRD M-IFGSM & 0.998 & 1.0   & 1.0   & 0.985 & 0.910 &1.0&0.985\\
FO-INRD M-IFGSM &0.952&0.863 & 0.991&0.981&0.827&0.622 &0.470  \\
OAO M-IFGSM   &0.775&  0.554 & 0.929& 0.581 & 0.499 & 0.679 & 0.777 \\
\midrule
SO-INRD Nesterov Momentum &0.995 & 0.989 &0.996
												 & 0.952 & 0.865 &  1.0  &0.954\\
FO-INRD Nesterov Momentum  &0.990&0.714&0.997
												  &0.979& 0.746 & 0.633&0.574\\
OAO Nesterov Momentum        &0.785& 0.646 & 0.925  &  0.671&  0.517 &    0.687 &0.753  \\
\midrule
SO-INRD Carlini\&Wagner &0.910& 0.988& 0.945 & 0.723 &0.856&0.850&0.713\\
FO-INRD Carlini\&Wagner &0.695 &0.594&  0.642 & 0.516 &0.785&0.494&0.119\\
OAO Carlini\&Wagner& 0.036&0.118 &  0.018 & 0.004 & 0.016 & 0.028 &  0.032 \\
\midrule
SO-INRD Elastic Net &0.777&0.943 & 0.875 &  0.687 &0.770&0.736& 0.815\\
FO-INRD Elastic Net  &0.685 &0.454& 0.561  &  0.502 &0.743& 0.361& 0.212\\
OAO Elastic Net&0.124& 0.210   & 0.060  & 0.014 & 0.150 & 0.092   &0.056 \\
\midrule
SO-INRD DeepFool  &0.914& 0.996&	 0.993& 0.860 &0.951&0.889&0.900 \\
FO-INRD DeepFool  &0.841&0.847& 0.936 & 0.777 & 0.928& 0.796&0.268	   \\
OAO DeepFool  & 0.397  &0.447 &  0.611 &  0.234& 0.381  &  0.367 & 0.607  \\
\bottomrule
\end{tabular}
}
\vskip-0.1in
\end{table*}
It is worth noting that the theoretical motivation presented here generalizes straightforwardly to any model which is trained to output a probability distribution given some high-dimensional input (e.g. autoregressive language models).
Thus, future research could significantly benefit from further utilization of our method in other domains by leveraging the fundamental cut-off between adversarial samples and base samples discovered in our paper.

\begin{figure*}[t]
\setkeys{Gin}{width=0.4\linewidth}
\begin{center}
  \includegraphics[width=0.3\linewidth]{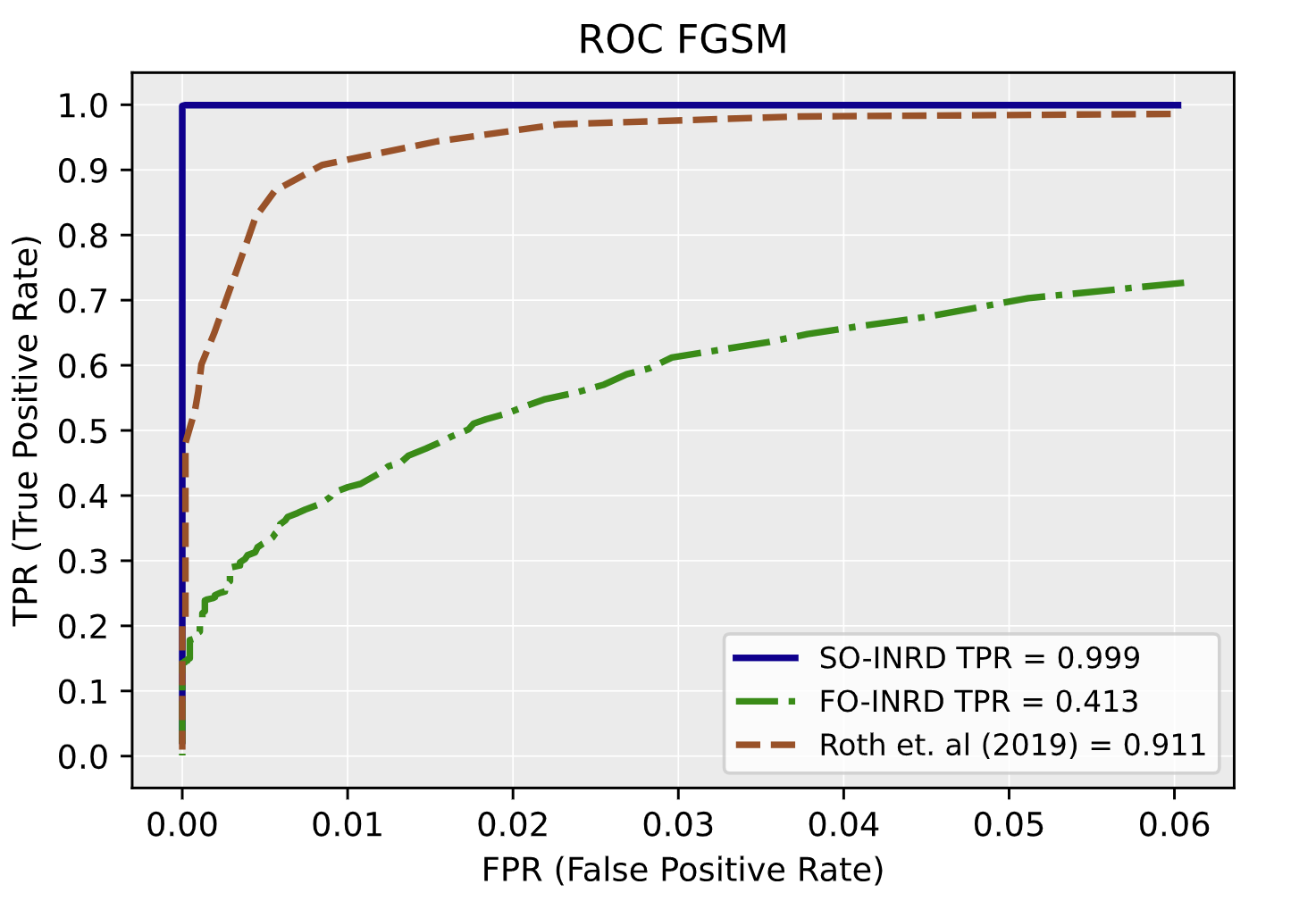}\,%
  \includegraphics[width=0.3\linewidth]{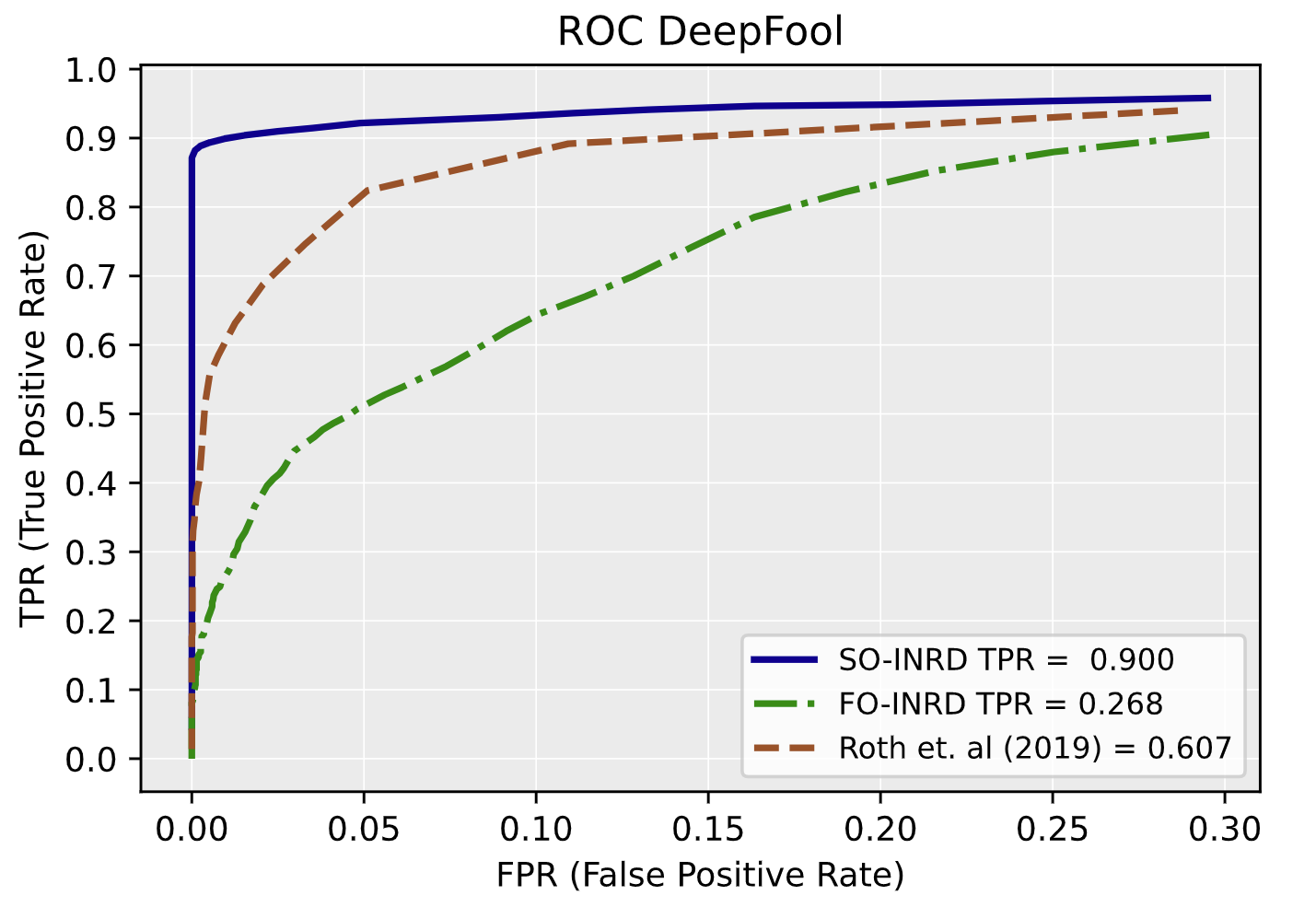}\,%
  \includegraphics[width=0.3\linewidth]{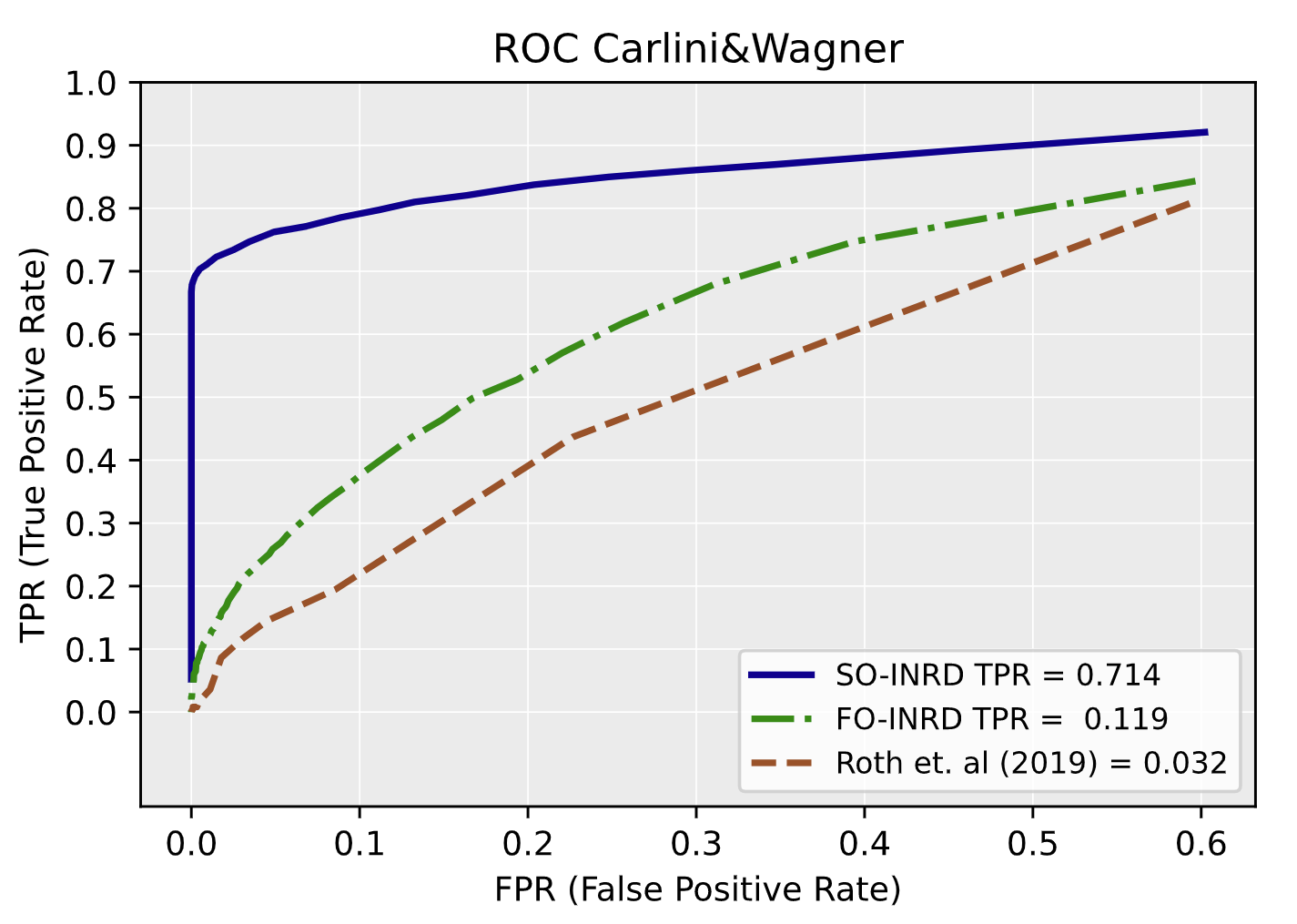} \\
  \includegraphics[width=0.3\linewidth]{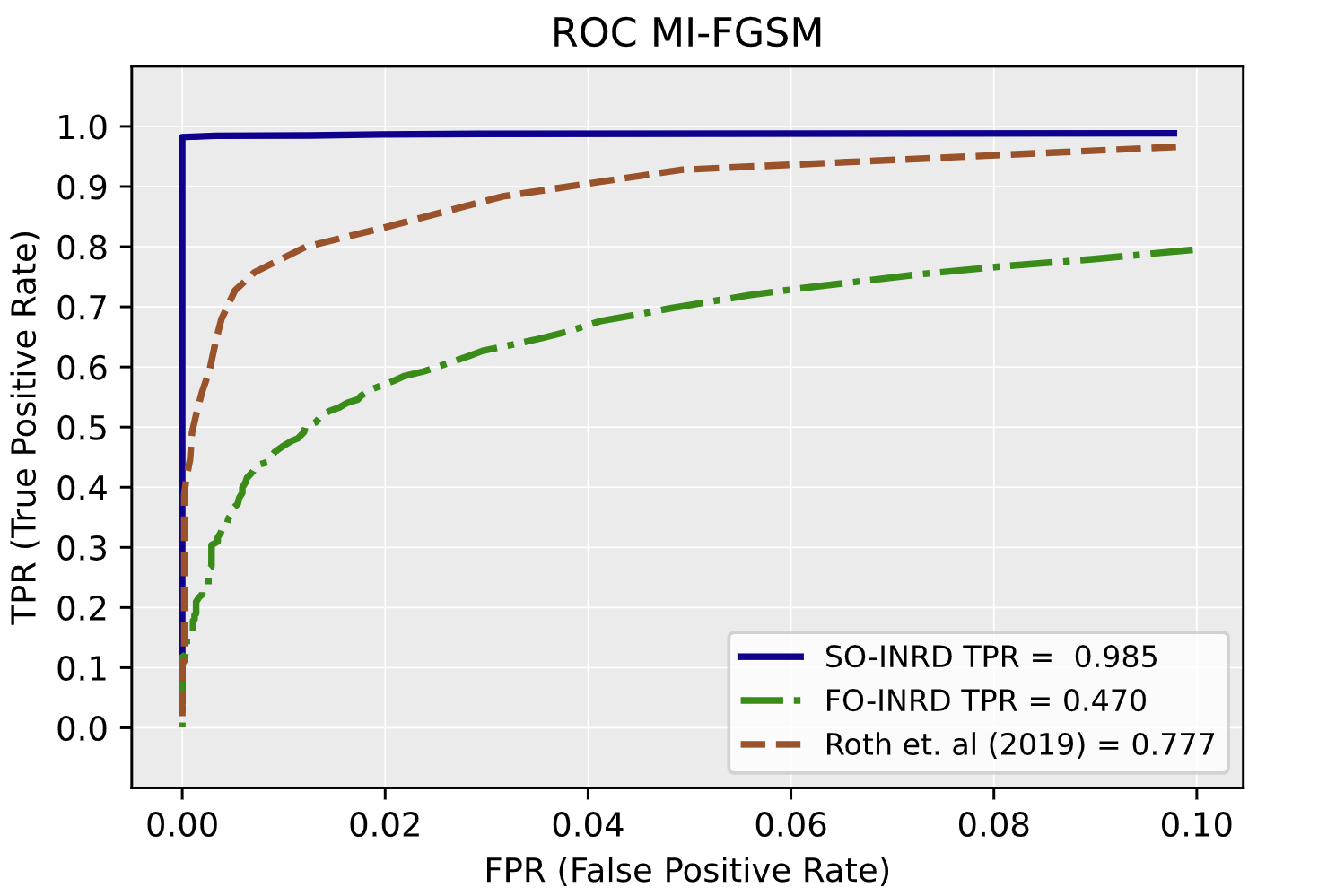}\,%
  \includegraphics[width=0.3\linewidth]{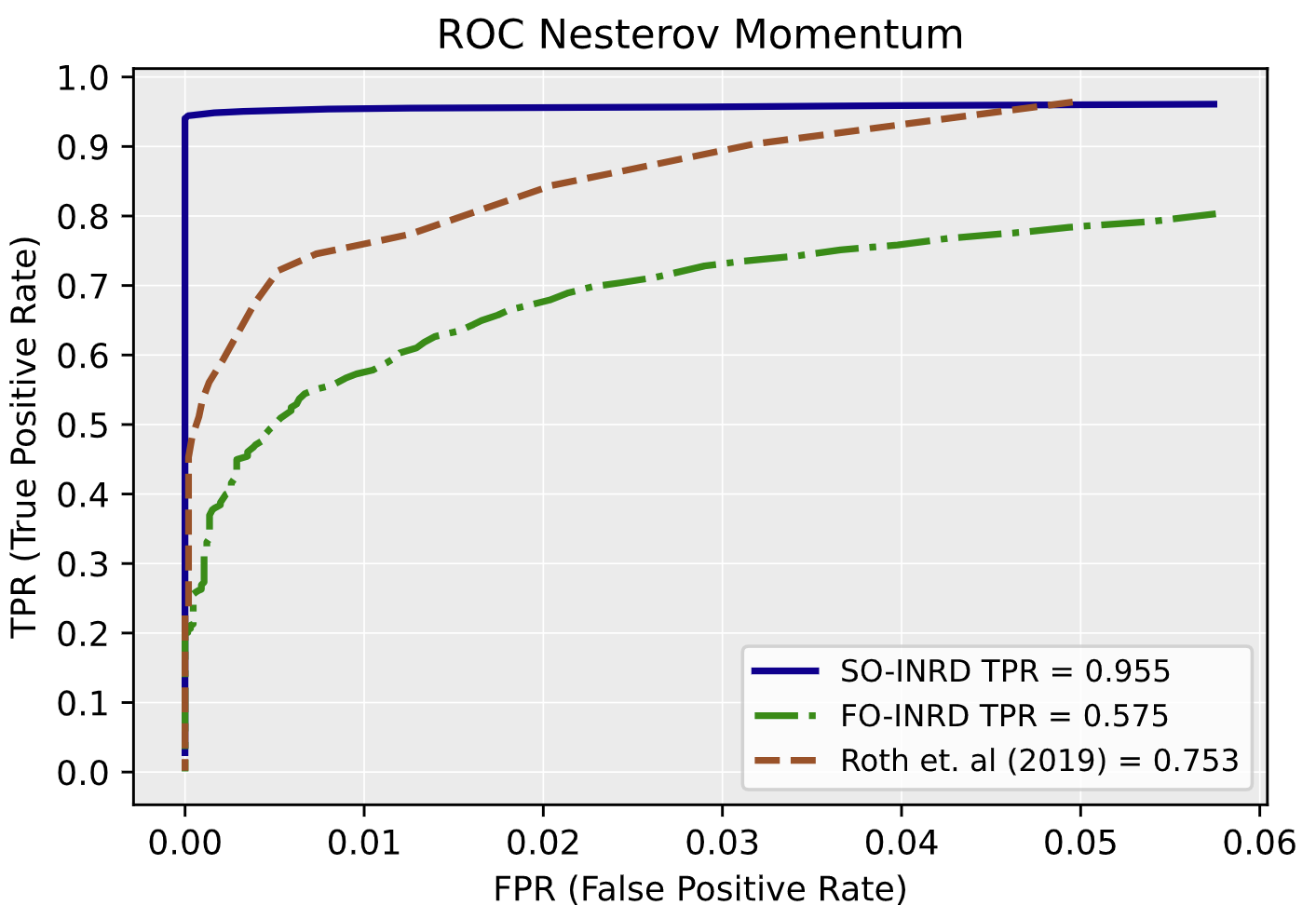}\,%
  \includegraphics[width=0.3\linewidth]{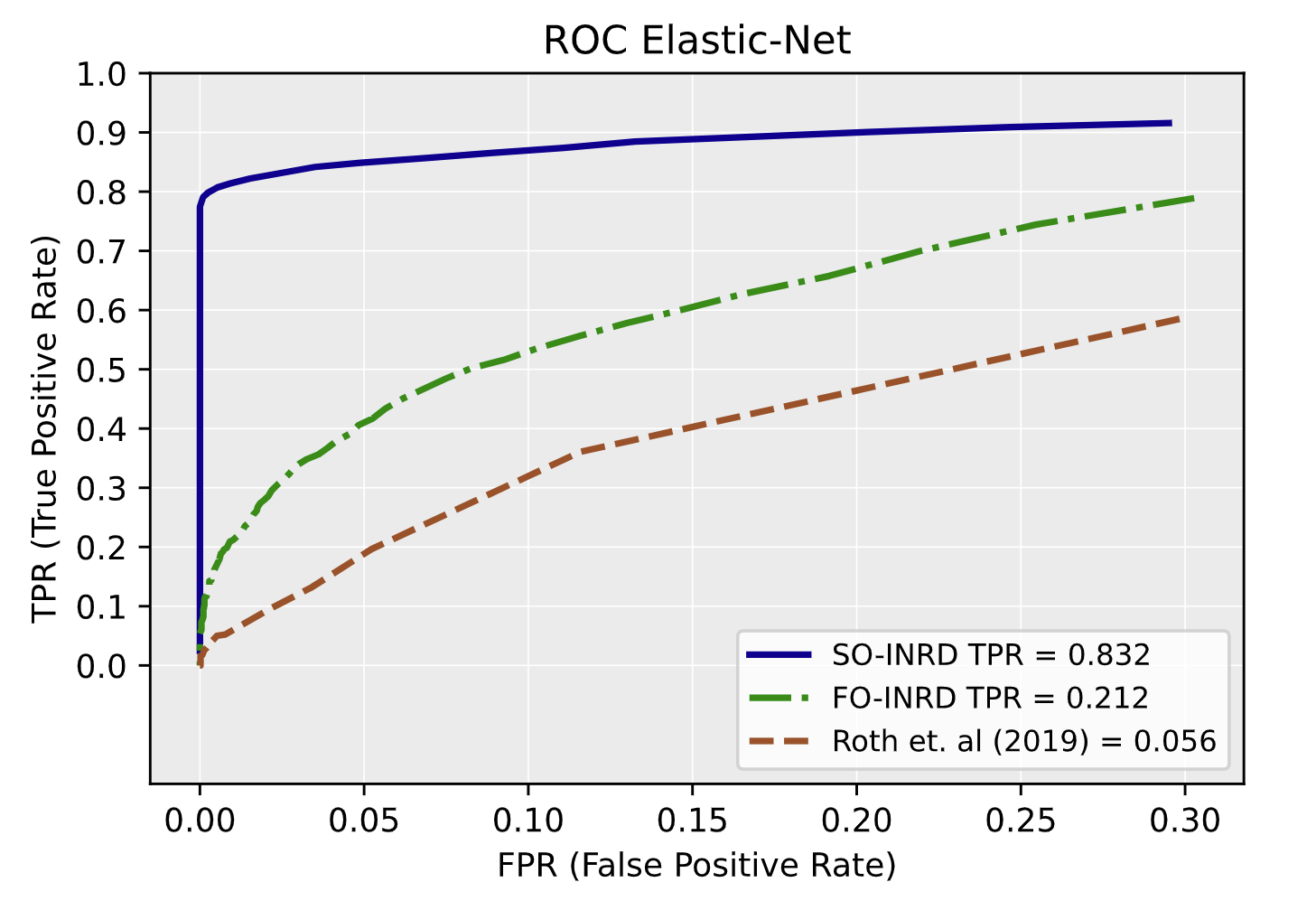}
\end{center}
\vskip -0.14in
\caption{ROC curves of FO-INRD, SO-INRD and OAO method for the following attack methods: FGSM, MI-FGSM, Nesterov Momentum, DeepFool, Carlini\&Wagner, Elastic Net Method in Robotank. TPR values are reported in the lower right box of the figure when FPR is equal to $0.01$.}
\vskip -0.26in
\label{roboroc}
\end{figure*}

\begin{table*}[h]
\caption{TPR for Feature Matching for SO-INRD and OAO method FPR=0.01}
\label{n1}
\centering
\scalebox{0.92}{
\begin{tabular}{lccccccr}
\toprule
Feature Matching    & Riverraid & RoadRunner & Alien  & Seaquest & Boxing & Robotank \\
\midrule
SO-INRD             & 0.882      &  0.863   &  0.9016  &  0.955  & 0.988 & 0.8978 \\
OAO Method  & 0.0088   &  0.006   &  0.007   &  0.0146  &0.0106  & 0.0158\\
\bottomrule
\end{tabular}
}
\vskip -0.1in
\end{table*}

\section{Experiments}

In our experiments agents are trained with DDQN \citep{wang16} in the Arcade Learning Environment (ALE) \citep{bell13} from OpenAI \citep{openai}. For a baseline we compare FO-INRD and SO-INRD with the detection method of OAO proposed by \citet{roth19}, which is based on estimating the average change in the odds ratio between classes under noise.
In Figure \ref{lsresults} we plot the value of $\mathcal{L}(s)$ over states for various games without an adversarial attack and under adversarial attack with the following methods: Carlini \& Wagner, Elastic Net, Nesterov Momentum, DeepFool, MIFGSM and FGSM. We show in the legends of Figure \ref{lsresults} the true positive rate (TPR) values for the different attacks when false positive rate (FPR) is equal to 0.01.
The value of $\Ls(s)$ for base states is generally well-concentrated and negative. On the other hand, for states computed by the different adversarial attack methods $\Ls(s)$ is clearly larger, matching the predictions of Proposition \ref{curvature-prop}.
The fact that $\Ls(s)$ is consistently larger at adversarial state observations across a wide variety of adversarial perturbation methods indicates that Assumption \ref{adv-assumption} qualitatively captures the behavior of these methods.
 In particular the FGSM-based methods and DeepFool do not explicitly optimize an objective function of the form $f(s) = J(s,\tau) + D_{s_0}(s)$ as in Assumption \ref{adv-assumption}. However, by enforcing a constraint on the distance of the adversarial sample from the original base sample, these methods implicitly solve an optimization problem of the form given in (\ref{advopt}), and thus exhibit the qualitative behavior predicted by Proposition \ref{curvature-prop}.
 In Table \ref{table1} we show TPR values for FO-INRD, SO-INRD, and the OAO method under the FGSM, MI-FGSM, Nesterov Momentum, DeepFool, Carlini\&Wagner, and Elastic-Net attacks when FPR is equal to 0.01. For all of the attack methods in all of the environments SO-INRD is able to detect adversarial perturbations with large TPR. SO-INRD outperforms the other detection methods in all cases except for Nesterov Momentum in Alien and Seaquest where FO-INRD has TPR 0.997 and 0.980 while SO-INRD has 0.996 and 0.952.
 We also observe that while the perturbations computed by FGSM, MI-FGSM, Nesterov Momentum can generally be detected with large TPR values by all the detection methods, the perturbations computed by Carlini\&Wagner and the Elastic-Net method are more difficult to detect. Despite the difficulty, SO-INRD achieves TPR values ranging from 0.713 to 0.988 for Carlini\&Wagner, and TPR values ranging from 0.687 to 0.943 for Elastic-Net when FPR is equal to 0.01. In Figure \ref{roadroc} and Figure \ref{roboroc} we show ROC curves for each detection method under the FGSM, MI-FGSM, Nesterov Momentum, DeepFool, Carlini\&Wagner and Elastic-Net method for RoadRunner and Robotank respectively. In Robotank the OAO method outperforms FO-INRD and even approaches the TPR of SO-INRD for high FPR under FGSM, MI-FGSM, Nesterov Momentum and DeepFool. However for the Carlini\&Wagner and Elastic-Net attacks, SO-INRD has a much higher TPR across a wide range of FPR levels.

 \begin{table*}[t]
 \caption{TPR values of INRD in the presence of a identification aware adversary when FPR=0.01.}
 \label{detecaware}
 \centering
 \scalebox{0.97}{
 \begin{tabular}{lccccccr}
 \toprule
 Detection Method
                 & RiverRaid
                 & RoadRunner
                  & Alien
                  & Seaquest
                  &  Boxing
                  &  Pong
                  & Robotank\\
 \midrule
 SO-INRD | C\&W & 0.650  &0.849& 0.445 &0.381 &   0.710& 0.712&0.657 \\
 FO-INRD | C\&W &  0.346  &0.348& 0.351&0.193 &   0.621    &  0.325 &   0.0973  \\
 \bottomrule
 \end{tabular}
 }
 \vskip -0.1in
 \end{table*}
\section{Computing Adversarial Directions Specifically to Evade INRD}

Recently, \citet{tramer20} introduced a comprehensive methodology for tailoring the optimization procedure used to produce adversarial examples in order to overcome detection and defense methods. In particular, the high level idea is to keep the attack as simple as possible while still accurately targeting the detection method. More specifically, the methodology is based on designing an attack based on gradient descent on some loss function. Further, minimizing the loss function should correspond closely to subverting the full detection method while still being possible to optimize. Critically, the authors highlight that while the choice of loss function to optimize can be a difficult task, the use of ``feature matching'' \cite{sven19} can circumvent most of the current detection methods.
 We now describe how we applied the methodology discussed above to design detection aware adversaries for SO-INRD. As a first attempt, we tested the ``feature matching'' approach that was used to break the OAO detection method in \citet{tramer20}. This approach attempts to match the logits of the adversarial example to those of a base example from a different class in order to evade detection. To optimize the loss for this method we used up to 1000 PGD iterations, and we searched step size varying from 0.01 to $10^{-6}$. We find that this method succeeds in reducing the TPR of the OAO method to nearly zero. It is also able to slightly reduce the TPR of our SO-INRD method (see results in Table \ref{n1}). However, as we will see next, a larger reduction in the TPR of SO-INRD can be achieved by optimizing a modified version of the loss from \citet{carlini17a}.
 Our next attempt is based on a modification of the \citet{carlini17a} formulation to additionally minimize the cost function $\Ls(s)$ used in SO-INRD,
 \begin{equation}
 \mathnormal{\min_{s^{\textrm{adv}} \in \sS} c\cdot J(s^{\rm{adv}}) + \norm{s^{\textrm{adv}}-s}_2^2} + \lambda\cdot\mathcal{L}(s^{\rm{adv}}).
 \label{carlinidetecaware}
 \end{equation}
 Recall that $\mathcal{L}(s)$ is consistently larger at adversarial examples than at base examples. Thus the above optimization problem attempts to find adversarial examples with as small values of $\Ls(s)$ as possible. Since the function $\Ls(s)$ involves taking the sign of the gradient, we use Backwards Pass Differentiable Approximation (BPDA) as introduced in \citet{carlini17a} to compute the gradients. However, we also tried designing an adversary with a fully differentiable cost function by using a perturbation in the gradient direction (without the sign). We found that this fully
 differentiable adversary performed significantly worse than the one based on BPDA.
 We conducted exhaustive grid search over all the parameters in this optimization method: learning rate, iteration number, confidence parameter $\kappa$, and objective function parameter $\lambda$. In C\&W we used up to 30000 iterations to find adversarial examples to bypass detection methods. We searched the confidence parameter from 0 to 50, the learning rate from 0.001 to 0.1, and $\lambda$ from 0.001 to 10.
 In our grid search over these hyperparameters we found that there is a trade-off between the attack success rate and the detection of the perturbations.  In other words, if we optimize the perturbation to be undetectable the success rate of the perturbation (i.e. the rate at which the perturbation actually makes
 the agent choose a non-optimal action) decreases. Therefore, when finalizing the hyperparameters for the SO-INRD detection-aware adversary we restricted our search to a setting where the decrease in the impact of the attack on the policy performance was at most $10\%$.

 Since FO-INRD is based on sampling a random perturbation, we use another approach introduced by \citet{carlini17a} to minimize the expectation of the original loss function when averaged over the randomness used in the detection method. In particular, we estimate the expectation by computing the empirical mean of the loss over 50 samples from the same noise source. As for the case of SO-INRD we grid search over hyperparameters to achieve as low a TPR as possible while losing at most $10\%$ in the impact of the attack.
 Table \ref{detecaware} shows the TPR in the adversary-aware setting with the best hyperparameters found for each method. The fact that SO-INRD still performs quite well in the adversary-aware setting is an indication that there is a fundamental trade-off between computing an adversarial direction and minimizing $\Ls(s)$.
 This trade-off is further supported by Proposition \ref{curvature-prop}, which demonstrates that searching for an adversarial example in a small neighborhood will tend to increase $\Ls(s)$.

\section{Conclusion}
In this paper we introduce a novel algorithm INRD, the first method for identification of adversarial directions in the deep neural policy manifold. Our method is theoretically motivated by the fact that local optimization objectives corresponding to the construction of adversarial directions lead naturally to lower bounds on the curvature of the cost function $J(s,\tau)$. We have further shown empirically that the curvature of $J(s,\tau)$ is significantly larger at adversarial states than at base observations, leading to a highly effective method SO-INRD for detecting adversarial directions in deep reinforcement learning. We additionally demonstrate that SO-INRD remains effective in the adversary-aware setting, and connect this fact to our original theoretical motivation. We believe that due to the strong empirical performance and solid theoretical motivation SO-INRD can be an important step towards producing robust deep reinforcement learning policies.

\nocite{langley00}

\bibliography{example_paper}
\bibliographystyle{icml2023}

\end{document}